\documentclass{article}

\usepackage{fullpage}
\date{}

\usepackage[style=numeric,maxbibnames=99,maxcitenames=2,natbib=true,backend=bibtex8]{biblatex}

\usepackage[utf8]{inputenc} %
\usepackage[T1]{fontenc}    %
\usepackage{hyperref}       %
\usepackage{url}            %
\usepackage{booktabs}       %
\usepackage{amsfonts}       %
\usepackage{nicefrac}       %
\usepackage{microtype}      %

 \usepackage{graphicx}
 \usepackage{lmodern}

\usepackage{subcaption}
\usepackage{booktabs}
\usepackage{xcolor}
\usepackage{mathtools,cuted}

\usepackage{amsmath,amsthm,amssymb}
\usepackage{breqn}
\usepackage{forloop}
\usepackage{algorithm}
\usepackage{algorithmicx}
\usepackage{algpseudocode}
\usepackage{paralist}
\usepackage{xspace}
\usepackage[capitalize]{cleveref}
\usepackage[title]{appendix}
\crefname{appsec}{Appendix}{Appendices}

\addbibresource{reference-list.bib}

\newcommand{\kl}[2]{D_{\mathrm{KL}}( #1 \parallel #2 )}
\newcommand{\js}[2]{D_{\mathrm{JS}}( #1 \parallel #2 )}
\newcommand{\wjs}[3]{D_{\mathrm{GJS}}^{#1}( #2 \parallel #3 )}
\newcommand{\td}[2]{ \Delta ( #1 \parallel #2 )}

\newtheorem{theorem}{Theorem}
\newtheorem{proposition}{Proposition}

\newtheorem{lemma}{Lemma}
\theoremstyle{definition}
\newtheorem{definition}{Definition}

\DeclareMathOperator{\hel}{hel}
\DeclareMathOperator{\pr}{Pr}

\DeclareMathOperator{\sech}{sech}

\DeclareMathOperator{\mil}{mil}
\DeclareMathOperator{\sign}{sign}

\newcommand{\ie}{\emph{i.e.}}
\newcommand{\eg}{\emph{e.g.}}

\newcommand{\kr}{Kre\u{\i}n\xspace}
\newcommand{\splsh}{\textsc{Simple-LSH}\xspace}

\newcommand{\defcal}[1]{\expandafter\newcommand\csname 
c#1\endcsname{{\mathcal{#1}}}}
\newcommand{\defbb}[1]{\expandafter\newcommand\csname 
b#1\endcsname{{\mathbb{#1}}}}
\newcommand{\defbf}[1]{\expandafter\newcommand\csname 
bf#1\endcsname{{\mathbf{#1}}}}
\newcounter{calBbCounter}
\forLoop{1}{26}{calBbCounter}{
	\edef\letter{\Alph{calBbCounter}}
	\expandafter\defcal\letter
	\expandafter\defbb\letter
	\expandafter\defbf\letter
}
\forLoop{1}{26}{calBbCounter}{
	\edef\letter{\alph{calBbCounter}}
	\expandafter\defbf\letter
}

\title{Locality-Sensitive Hashing for $f$-Divergences and \kr Kernels: Mutual 
Information Loss and Beyond}

\author{ %
	Lin~Chen$ ^{1,2} $ \quad Hossein~Esfandiari$ ^2 $ \quad Thomas~Fu$ ^2 $ 
	\quad Vahab~S.~Mirrokni$ ^2 $  \\
	$ ^1 $Yale University\quad 
	$ ^2 $Google Research\\
	\texttt{lin.chen@yale.edu, 
		\{esfandiari,thomasfu,mirrokni\}@google.com}
}

\begin{document}

\maketitle

\begin{abstract}
	Computing approximate nearest neighbors in high dimensional spaces is a central
	problem in large-scale data mining with a wide range of applications in machine learning
	and data science. A popular and effective technique in computing nearest 
	neighbors approximately is the \emph{locality-sensitive hashing} (LSH) 
	scheme. %
	In this paper, we aim to develop LSH schemes for distance functions that 
	measure the distance between two probability distributions, particularly for $ f $-divergences as well as a generalization to capture mutual information loss. %
	First, we provide a general framework to design LHS schemes for 
	$f$-divergence distance functions and develop LSH schemes for the 
	\emph{generalized Jensen-Shannon divergence} and \emph{triangular 
	discrimination} in this framework. We show a two-sided approximation result 
	for approximation of the generalized Jensen-Shannon divergence by the 
	Hellinger distance, which may be of independent interest.
	Next, we show a general method of reducing the problem of designing an LSH 
	scheme for a \kr kernel (which can be expressed as the difference of two 
	positive definite kernels) to the problem of maximum inner product search. 
	We exemplify this method by applying it to the \emph{mutual information 
	loss}, due to its several important applications such as model 
	compression.

\end{abstract}

\section{Introduction}
A central problem in machine learning and data mining is to find top-$k$ 
similar items to each item in a dataset. Such problems,  referred to as {\em 
approximate nearest neighbor} problems,  are especially challenging in high 
dimensional spaces and are an important part of a wide range of data mining 
tasks such as finding near-duplicate pages in a corpus of images or web pages, 
or clustering items in a high-dimensional metric space. 
A popular technique for solving these problems is the \emph{locality-sensitive 
hashing} (LSH) technique~\citep{indyk1998approximate}. In this method, items in 
a high-dimensional metric space are first mapped into buckets (via a hashing 
scheme) with the property that closer items have a higher chance of being 
assigned to the same bucket. LSH-based nearest neighbor methods 
limit their scope of search to the items that fall into the same bucket in 
which the target item resides~\footnote{We note that LSH is a popular 
data-independent technique for nearest neighbor search. Another category of 
nearest neighbor search algorithms, referred to as data-dependent techniques, are 
\emph{learning-to-hash} methods~\citep{wang2018survey} which learn a hash 
function that maps each item to a compact code. However, this line of work is out of the scope of this paper.}.

Locality sensitive hashing was first introduced and studied by \cite{indyk1998approximate}.
They provide a family of basic locality-sensitive hash functions for the 
Hamming distance in a $d$-dimensional space and  for the $L^1$ distance in a  
$d$-dimensional Euclidean space.
They also show that such a family of hash functions provides a randomized $(1 + 
\epsilon)$-approximation algorithm for the nearest neighbor search problem with 
sublinear space and sublinear query time.
Following \cite{indyk1998approximate}, several families of locality-sensitive 
hash functions have been designed and implemented for different metrics, each 
serving a certain application. We summarize further results in this area in 
\cref{sec:related}.

In several applications, data points can be represented as probability distributions. One example is the space of users' browsed web pages, read articles or watched videos. In order to represent such data, one can represent each user by a distribution of documents they read, and the documents by topics included in those documents. Other examples are time series distributions, content of documents, or images that can be represented as histograms. Particularly, analysis of similarities in time series distributions or documents can be used in the context of attacks, and spam detection. Analysis of user similarities can be used in recommendation systems and online advertisement. 

In fact, many of the aforementioned applications deal with huge datasets and 
require very time efficient algorithms to find similar data points. These 
applications motivated us to study LSH functions for distributions, especially 
for distance measures with information-theoretic justifications. In fact, in 
addition to $k$-nearest neighbor, LSH functions can be used to implement very 
fast distributed algorithms for traditional clusterings such as $k$-means 
\cite{bhaskara2018distributed}.

Recently, \citet{mao2017s2jsd} noticed the importance and lack of LSH functions 
for the distance of distributions, especially for information-theoretic 
measures. They attempted to design an LSH to capture the famous 
\emph{Jensen-Shannon (JS) divergence}. However, instead of directly providing 
locality-sensitive hash functions for Jensen-Shannon divergence, they take two 
steps to turn this distance function into a new distance function that is 
easier to hash. They first looked at a less common divergence measure 
\emph{S2JSD} which is the square root of two times the JS  
divergence. 
Then they defined a related distance function $S2JSD_{new}^{approx}$, which 
was  
obtained by only keeping the linear 
terms in the Taylor expansion of the logarithm in the expression of S2JSD %
and designed a locality-sensitive hash function for the new measure 
$S2JSD_{new}^{approx}$. This is an interesting work; however, unfortunately it 
does not provide any bound on the actual JS divergence using the LSH that they 
designed for $S2JSD_{new}^{approx}$. Our results resolve this issue by 
providing LSH schemes with provable guarantees for information-theoretic 
distance measures including the JS divergence and its generalizations.

\citet{mu2010non} proposed an LSH algorithm for non-metric data by embedding 
them into a reproducing kernel \kr space. However, their method is indeed 
data-dependent. Given a finite set of data points $ \cM $, they compute the 
distance matrix $ D $ whose $ (i,j) $-entry is the distance between $ i $ and $ 
j $, where both $ i $ and $ j $ are data points in $ \cM $. Data is embedded 
into a reproducing kernel \kr space by performing singular value decomposition 
on a transform of the distance matrix $ D $. The embedding changes if we are 
given another dataset.

\textbf{Our Contributions.}
In this paper, we first study LSH schemes for $ f $-divergences\footnote{The 
formal definition of $ f $-divergence is presented in \cref{sub:f_divergence}.} 
between 
 two probability distributions. 
We first in \cref{thm:main} provide a simple reduction tool for designing LSH 
schemes for 
the family of $f$-divergence distance functions. This proposition is not hard 
to prove but might be of independent interest. 
Next we use this tool and provide LSH schemes for two examples of 
$f$-divergence distance functions, Jensen-Shannon divergence and 
triangular discrimination.
Interestingly our result holds for a generalized version of Jensen-Shannon divergence.
We apply this tool to design and analyze an LSH scheme for the 
generalized Jensen-Shannon (GJS)
 divergence through approximation by the squared Hellinger distance. 
We use a similar technique to provide an LSH for triangular discrimination. Our 
approximation is provably lower bounded by a factor $0.69$ for the 
Jensen-Shannon
 divergence and is lower bounded by a factor $0.5$ for triangular 
 discrimination. 
The approximation result of the generalized Jensen-Shannon divergence by the 
squared Hellinger requires a more involved analysis and the lower and upper 
bounds depend on the weight parameter.
This approximation result may be of independent interest for other machine learning tasks such as approximate information-theoretic clustering~\citep{chaudhuri2008finding}.
Our technique may be useful for designing LSH schemes for other $f$-divergences.

Next, we 
 propose a general approach to designing an LSH for \kr kernels. A \kr kernel is a kernel function that can be expressed as the difference of two positive definite kernels. 
Our approach is built upon a reduction to the problem of maximum inner product search (MIPS)~\citep{shrivastava2014asymmetric,neyshabur2015symmetric,yan2018norm}. 
In contrast to our LSH schemes for $ f $-divergence functions via approximation, our approach for \kr kernels involves no approximation and is theoretically \emph{lossless}.
Contrary to \citep{mu2010non}, this approach is data-independent.
We exemplify our approach by designing an LSH function specifically for mutual information loss. Mutual information loss is of our particular interest due to its several important applications such as model compression \cite{bateni2019categorical,dhillon2003divisive}, and compression in discrete memoryless channels 
\cite{
	kartowsky2018greedy,sakai2014suboptimal,
	zhang2016low}.

\subsection{Other Related Work}\label{sec:related}
\citet{datar2004locality} designed an LSH for $ L^p $ distances using 
$ p $-stable distributions.
\citet{broder1997resemblance} designed MinHash for the Jaccard similarity.
 LSH for other distances and similarity measures were proposed later, for 
example, angle similarity 
\citep{charikar2002similarity}, spherical LSH on a unit hypersphere 
\citep{terasawa2007spherical}, rank similarity~\citep{yagnik2011power}, and 
non-metric LSH~\citep{mu2010non}.
\citet{li2014coding} demonstrated that uniform quantization outperforms the 
standard method in \citep{datar2004locality} with a random offset.
\citet{gorisse2012locality} proposed an LSH family for $ \chi^2 $ distance by 
relating it to the $ L^2 $ distance via an algebraic transform. 
Interested readers are referred to a more comprehensive survey of existing LSH methods \citep{wang2014hashing}.
Another related problem is the construction of feature maps of positive 
definite kernels. A feature map maps a data point into a usually 
higher-dimensional space such that the inner product in that space agrees with 
the kernel in the original space. Explicit feature maps for additive kernels 
are introduced in~\citep{vedaldi2012efficient}.
Bregman divergences are another broad class of distances that arise naturally 
in practical applications. The nearest neighbor search problem for Bregman 
divergences were studied in 
\citep{abdullah2012approximate,abdullah2015directed,abdelkader2019approximate}.

\section{Preliminaries}

\subsection{Locality-Sensitive Hashing}\label{sub:prelim_lsh}
Let $ \mathcal{M} $ be the universal set of items (the database), endowed with 
a distance function $ D $.
Ideally, we would like to have a family of hash functions such that for any two 
items $ p $ and $ q $ in $ \mathcal{M} $ that are close to each other, their 
hash values collide with a higher probability, and if they reside far apart, 
their hash values collide with a lower probability. A family of hash functions 
with the above property is said to be locality-sensitive.
A hash value is also known as a bucket in other literature.  Using this 
metaphor, hash functions are imagined as sorters that place items into buckets.
If hash functions are locality-sensitive, it suffices to search the bucket into 
which an item falls if one wants to know its nearest neighbors.
 The  $ (r_1,r_2,p_1,p_2) $-sensitive LSH family formulates the intuition of 
 locality sensitivity and is formally
defined in 
\cref{def:LSH}.

\begin{definition}[\cite{indyk1998approximate}]\label{def:LSH}
	Let $ \mathcal{H} = \{ h:\mathcal{M}\to U \} $ be a 
	family of hash functions, where $ U $ is the set of possible hash values. 
	Assume that there is a distribution $ h\sim 
	\mathcal{H} $ over the family of 
	functions. This family $ \mathcal{H} $ is called $ (r_1, r_2, p_1,p_2) 
	$-sensitive  ($ r_1<r_2 $ and $ p_1>p_2 $) for $ D $, if for $ \forall p, q 
	\in \mathcal{M} $ the following statements hold:
	(1) if $ D(p, q) \le r_1 $, then $ \pr_{h\sim 
			\mathcal{H}}[h(p)=h(q)]\ge p_1 $; (2)
	if $ D(p, q) > r_2 $, then $ \pr_{h\sim 
			\mathcal{H}}[h(p)=h(q)]\le p_2 $.
\end{definition}

We would like to note that the gap between the high probability $ p_1 $ and $ 
p_2 $ can be amplified by constructing a compound hash function that 
concatenates multiple functions from an LSH family. For example, one can 
construct $ g:\mathcal{M}\to U^K $ such that $ g(p)\triangleq (h_1(p), \dots, 
h_K(p)) $ for $ \forall p\in \mathcal{M} $, where $ h_1,\dots, h_K $ are chosen 
from the LSH family $ \mathcal{H} $. This conjunctive construction reduces the 
amount of items in one bucket. To improve the recall, an additional disjunction 
is introduced. To be precise, if $ g_1,\dots, g_L $ are $ L $ such compound 
hash functions, we search all of the buckets $ g_1(p), \dots, g_L(p) $ in order 
to 
find the nearest neighbors of $ p $.

\subsection{$ f $-Divergence}\label{sub:f_divergence}

Let $ P $ and $ Q $ be two probability measures associated with a common sample 
space $ \Omega $. We write $ P\ll Q$ if $ P $ is absolutely continuous with 
respect to $ Q $, which requires that for every subset $ A $ of $ \Omega $, $ 
Q(A)=0 $ imply $ P(A)=0 $. 

Let $ f:(0,\infty)\to \mathbb{R} $ be a convex function that satisfies $ f(1)=0 
$. If $ P\ll Q $, the 
$ f $-divergence from $ P $ to $ Q 
$~\citep{csiszar1964informationstheoretische} 
is defined by
\begin{equation}\label{eq:definition_f_divergence}
D_f(P\parallel Q) = \int_{\Omega} f\left( \frac{d P}{d Q} \right) 
d Q,
\end{equation}
provided that the right-hand side exists,
where $ \frac{d P}{d Q} $ is the Radon-Nikodym derivative of $ P $ with 
respect to $ Q $.
In 
general, an $ f $-divergence is not symmetric: $ D_f(P\parallel Q)\ne 
D_f(Q\parallel P) $.

If $ f_{\mathrm{KL}}(t) = t\ln t + (1-t) $, the $ f_{\mathrm{KL}} $-divergence 
yields the \emph{KL divergence}
$
\kl{P}{Q} = \int_{\Omega} \ln\frac{d P}{d Q} d P$~\cite{cover2012elements}.
If $ \hel(t)=\frac{1}{2}(\sqrt{t}-1)^2 $,
 the $ \hel 
$-divergence is the 
\emph{squared Hellinger distance}
$
H^2(P,Q) = \frac{1}{2} \int_{\Omega} 
(\sqrt{dP}-\sqrt{dQ})^2$~\cite{daskalakis2017square}.
If $ \delta(t) = \frac{(t-1)^2}{t+1} $, the $ \delta $-divergence is the 
\emph{triangular discrimination} (also known as Vincze-Le Cam 
distance)~\citep{le2012asymptotic,vincze1981concept}. If the sample space is 
finite, the triangular discrimination between $ P $ and $ Q $ is given by 
$\Delta(P\parallel Q) = \sum_{i\in \Omega} \frac{(P(i)-Q(i))^2}{P(i)+Q(i)}$.

The \emph{Jensen-Shannon (JS) divergence} is a symmetrized version of the 
KL divergence. If $ P\ll Q $, $ Q\ll P $ and $ M=(P+Q)/2 $, the JS divergence 
is defined by
\begin{equation}\label{eq:jensen-shannon}
\js{P}{Q} = \frac{1}{2} \kl{P}{M} + \frac{1}{2} \kl{Q}{M}\enspace.
 \end{equation}

\subsection{Mutual Information Loss and Generalized Jensen-Shannon Divergence}
\label{sub:mil-gjs}

The mutual information loss arises naturally in many machine learning tasks, 
such as information-theoretic clustering~\citep{dhillon2003divisive} and 
categorical feature compression~\citep{bateni2019categorical}. 

Suppose that two random variables $ X $ and $ C $ obeys a joint distribution $ p(X,C) $. This joint distribution can model a dataset where $ X $ denotes the feature value of a data point and $ C $ denotes its label~\citep{bateni2019categorical}.
Let $\mathcal{X}$ and $\mathcal{C}$ denote the support of $X$ and $C$ (\ie, the 
universal set of all possible feature values and labels), respectively. 
Consider clustering two feature values into a new combined value. This 
operation can be represented by the following map 
 \[ 
\pi_{x,y}: \mathcal{X}\to \mathcal{X}\setminus\{x,y\}\cup \{z\}\quad\text{such that}\quad \pi_{x,y}(t) = \begin{cases}
t, & t\in \cX\setminus \{x, y\}\enspace,\\
z, & t = x,y\enspace,
\end{cases}
 \]
where $x$ and $y$ are the two feature values to be clustered and $z\notin \cX$ 
is the new combined feature value. To make the dataset after applying the map 
$\pi_{x,y}$ preserve as much information of the original dataset as possible, 
one has to select two feature values $x$ and $y$ such that the mutual 
information loss incurred by the clustering operation
$
\mil(x,y) = I(X;C) - I(\pi_{x,y}(X);C)
$
 is minimized,
where $I(\cdot;\cdot)$ is the mutual information between two random 
variables~\cite{cover2012elements}. 
Note that the \emph{mutual information loss (MIL) divergence} $ 
\mil:\cX\times\cX\to\bR $ is symmetric in both arguments and always 
non-negative due to the data processing inequality~\citep{cover2012elements}.

Next, we motivate the generalized Jensen-Shannon divergence. If we let $P$ and 
$Q$ be the conditional distribution of $C$ given $X=x$ and $X=y$, respectively, 
such that $ P(c) = p(C=c| X=x) $ and $ Q(c)=p(C=c| X=y) $, the mutual 
information loss can be re-written as
\begin{equation}\label{eq:mil}
    \lambda \kl{P}{M_\lambda} + (1-\lambda) \kl{Q}{M_\lambda}\enspace,
\end{equation}
where  $\lambda=\frac{p(x)}{p(x)+p(y)}$ and the distribution $ 
M_\lambda=\lambda P + (1-\lambda)Q $.
Note that \eqref{eq:mil} is a generalized version of \eqref{eq:jensen-shannon}. 
Therefore, we define the 
\emph{generalized Jensen-Shannon (GJS) 
divergence} between $P$ and $Q$~\citep{lin1991divergence,ali1966general,dhillon2003divisive}
by
$
\wjs{\lambda}{P}{Q} = \lambda \kl{P}{M_\lambda} + (1-\lambda) 
\kl{Q}{M_\lambda}$,
where $ \lambda\in [0,1] $ and $ M_\lambda=\lambda P + (1-\lambda)Q $. We 
immediately have $ \wjs{1/2}{P}{Q} = \js{P}{Q} $, which indicates that the 
JS divergence is indeed a special case of the GJS divergence when $ 
\lambda=1/2 $. The GJS
 divergence has 
another equivalent definition
$
	\wjs{\lambda}{P}{Q} = H(M_\lambda) - \lambda H(P) - 
	(1-\lambda) H(Q)$,
where $H(\cdot)$ denotes the Shannon entropy~\cite{cover2012elements}. In 
contrast to the MIL 
divergence, the GJS $ \wjs{\lambda}{\cdot}{\cdot} $ is not symmetric in general 
as the weight $ \lambda\in [0,1] $ is fixed and not necessarily equal to $ 1/2 
$. We will show in \cref{lem:wjs-as-f-divergence} that the GJS divergence is an 
$ f $-divergence.

\subsection{Positive Definite Kernel and \kr Kernel}

 We first review the definition of a positive definite kernel.

\begin{definition}[Positive definite kernel~\citep{scholkopf2001kernel}]
	Let $ \cX $ be a non-empty set. A symmetric, real-valued map $ 
	k:\cX\times\cX\to \bR $ is a positive definite kernel on $ \cX $ if for all 
	positive integer $ n $, real numbers $ a_1,\dots,a_n\in \bR $, and $ 
	x,\dots,x_n\in \cX $, it holds that $
	\sum_{i=1}^n \sum_{j=1}^n a_i a_j k(x_i,x_j) \ge 0$.
\end{definition}

A kernel is said to be a \emph{\kr} kernel if it can be represented as 
the difference of two positive definite kernels. The formal definition is presented below.
\begin{definition}[\kr kernel~\citep{ong2004learning}]
	Let $ \cX $ be a non-empty set. A symmetric, real-valued map $ k:\cX\times\cX\to \bR $ is a \kr kernel on $ \cX $ if there exists two positive definite kernels $ k_1 $ and $ k_2 $ on $ \cX $ such that $ k(x,y)=k_1(x,y)-k_2(x,y) $ holds for all $ x,y\in \cX $.
\end{definition}
 
\section{LSH Schemes for $ f $-Divergences}
We build LSH schemes for $ f $-divergences based on approximation via another $ 
f $-divergence if the latter admits an LSH family. If $ D_f $ and $ D_g $ are 
two divergences associated with convex functions $ f $ and $ g $ as defined by 
\eqref{eq:definition_f_divergence}, the approximation ratio of $ 
D_f(P\parallel Q) $ to $ D_g(P \parallel Q) $ is determined by the ratio of 
the functions $ f $ and $ g $, as well as the ratio of $ P $ to $ Q $ (to 
be precise, $ \inf_{i\in \Omega} \frac{P(i)}{Q(i)} $)~\citep{sason2016f}.

\begin{proposition}[{Proof in \cref{app:main}}]\label{thm:main}
	Let $ \beta_0\in (0,1),L,U>0 $ and let $ f $ and $ g $ be two convex 
	functions $ 
	(0,\infty)\to \mathbb{R} $ 
	that obey $ f(1)=0 $, $ g(1)=0 $, and $f(t), g(t)>0 $ for every $ t\ne 1 
	$. 
	Let $ \mathcal{P} $ be a set of probability measures
 on a  finite sample space $ \Omega $ such that for every $ i\in \Omega $ and $ 
 P, Q\in \mathcal{P} $, $
 0< \beta_0\le \frac{P(i)}{Q(i)} \le \beta_0^{-1}
 $.
	 Assume that for every $ \beta\in 
	 (\beta_0, 1)\cup (1,\beta_0^{-1}) $, it holds that
	 $
	 0<L\le \frac{f(\beta)}{g(\beta)} \le U < \infty$.
	 If $ \mathcal{H} $ forms 
	 an $ (r_1, r_2, p_1,p_2) $-sensitive family for $ 
	 g 
	 $-divergence on $ \mathcal{P} $, then it is also an  
	 $ (Lr_1, 
	 Ur_2, p_1,p_2)  
	 $-sensitive family for $ f $-divergence on $ \mathcal{P} $.
\end{proposition}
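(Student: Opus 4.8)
The plan is to reduce the LSH statement to a single two-sided (sandwich) inequality between the two divergences, namely
\[
L \cdot D_g(P \parallel Q) \;\le\; D_f(P \parallel Q) \;\le\; U \cdot D_g(P \parallel Q)
\]
valid for all $P, Q \in \mathcal{P}$, and then chain this inequality with the collision-probability guarantees of the $g$-divergence family. First I would establish the sandwich inequality termwise. Writing $\beta_i = P(i)/Q(i)$ for $i \in \Omega$, finiteness of $\Omega$ (together with $\beta_0>0$, which forces $P$ and $Q$ to share support so both divergences are finite) lets me expand $D_f(P \parallel Q) = \sum_{i} f(\beta_i)\, Q(i)$ and likewise for $g$. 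The hypothesis confines every $\beta_i$ to $[\beta_0, \beta_0^{-1}]$, and the ratio bound $L \le f(\beta)/g(\beta) \le U$ together with $g(\beta) > 0$ for $\beta \neq 1$ yields the pointwise estimate $L \cdot g(\beta_i)\, Q(i) \le f(\beta_i)\, Q(i) \le U \cdot g(\beta_i)\, Q(i)$. Summing over $i$ gives the sandwich; the indices with $\beta_i = 1$ contribute nothing on either side since $f(1) = g(1) = 0$.

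Second, once the sandwich is in hand, the LSH claim follows by purely logical threshold-chasing. If $D_f(P \parallel Q) \le L r_1$, the lower bound gives $L \cdot D_g(P \parallel Q) \le L r_1$, hence $D_g(P \parallel Q) \le r_1$, so the $g$-divergence guarantee forces $\pr_{h \sim \mathcal{H}}[h(P) = h(Q)] \ge p_1$. Symmetrically, if $D_f(P \parallel Q) > U r_2$, the upper bound gives $U r_2 < D_f(P \parallel Q) \le U \cdot D_g(P \parallel Q)$, hence $D_g(P \parallel Q) > r_2$, and the guarantee forces the collision probability to be at most $p_2$. This is exactly $(L r_1, U r_2, p_1, p_2)$-sensitivity for $f$-divergence. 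The ordering required by \cref{def:LSH} is preserved automatically: $p_1 > p_2$ is inherited, and $L r_1 \le U r_1 < U r_2$ follows from $L \le U$ and $r_1 < r_2$.

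The one place that needs care — and the main obstacle, modest as it is — is that the hypothesis supplies the ratio bound $L \le f/g \le U$ only on the \emph{open} set $(\beta_0, 1) \cup (1, \beta_0^{-1})$, whereas the measures in $\mathcal{P}$ may produce $\beta_i$ equal to the endpoints $\beta_0$ or $\beta_0^{-1}$. To close this gap I would invoke continuity: a finite convex function on $(0,\infty)$ is continuous there, so $f$ and $g$ are continuous at $\beta_0$ and $\beta_0^{-1}$, and since $g$ does not vanish away from $1$, the quotient $f/g$ is continuous at both endpoints. Taking one-sided limits of $L \le f(\beta)/g(\beta) \le U$ extends it as a weak inequality — which is all that is needed — to the closed pieces $[\beta_0, 1)$ and $(1, \beta_0^{-1}]$, legitimizing the termwise estimate at every admissible value of $\beta_i$.
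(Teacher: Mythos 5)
Your proof is correct, and the threshold-chasing in your second step is exactly what the paper does; the difference lies in how the sandwich inequality $L\cdot D_g(P\parallel Q)\le D_f(P\parallel Q)\le U\cdot D_g(P\parallel Q)$ is obtained. The paper does not prove it termwise: it invokes a known comparison theorem for $f$-divergences (Theorem~6 of Sason--Verd\'u, restated as \cref{thm:sason} in \cref{app:approximation}), applied to the pair $P,Q$ with $\beta_1=\inf_i Q(i)/P(i)$ and $\beta_2=\inf_i P(i)/Q(i)$, together with the interval inclusion $(\beta_2,\beta_1^{-1})\subseteq(\beta_0,\beta_0^{-1})$; because that theorem requires $\beta_1,\beta_2\in[0,1)$, the paper must first dispose of the case $P=Q$ separately (there the divergence vanishes and any hash trivially collides). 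Your route replaces the citation by an elementary pointwise estimate $L\,g(\beta_i)Q(i)\le f(\beta_i)Q(i)\le U\,g(\beta_i)Q(i)$ summed over the finite sample space, which is fully self-contained, needs no case split at $P=Q$ (all terms are zero there), and — unlike the paper — explicitly addresses the genuine subtlety that the ratio hypothesis is stated only on the open set $(\beta_0,1)\cup(1,\beta_0^{-1})$ while $\beta_i$ may attain the endpoints; your continuity argument (finite convex functions on $(0,\infty)$ are continuous, and $g$ is nonvanishing at $\beta_0,\beta_0^{-1}$) closes that gap cleanly, whereas the paper implicitly delegates endpoint attainment to the statement of Sason's theorem, whose supremum is likewise taken over an open interval even though the infima $\beta_1,\beta_2$ are attained on a finite $\Omega$. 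What the paper's approach buys is brevity and reliance on a result that packages the analytic content; what yours buys is a proof readable without external references and slightly stronger bookkeeping at the boundary. One cosmetic remark: your observation that $Lr_1<Ur_2$ preserves the ordering required by \cref{def:LSH} is a nice touch the paper omits.
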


\cref{thm:main} provides a general strategy of constructing LSH families for $ 
f $-divergences. The performance of such LSH families depends on the tightness 
of 
the approximation. 
In \cref{sub:weighted_js,sub:triangular_discrimination}, as instances of the 
general strategy, we derive concrete 
results for the generalized Jensen-Shannon 
divergence and triangular discrimination, respectively.

\subsection{Generalized Jensen-Shannon Divergence}\label{sub:weighted_js}
First, \cref{lem:wjs-as-f-divergence} shows that the GJS divergence is indeed 
an 
instance of $ f $-divergence.
\begin{lemma}[{Proof in 
\cref{app:wjs-as-f-divergence}}]\label{lem:wjs-as-f-divergence}
	Define 
$	m_\lambda(t) = \lambda t \ln t - (\lambda t + 1 - 
	\lambda)\ln(\lambda t+1-\lambda)$.
	For any $ \lambda\in[0,1] $, $ m_\lambda(t) $ is convex on $ (0,\infty) $ 
	and $ m_\lambda(1)=0 $. Furthermore, $ m_\lambda $-divergence yields the 
	GJS divergence with parameter $ \lambda $.
\end{lemma}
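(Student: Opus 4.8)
The plan is to verify the three claims in order of increasing effort, all by direct computation. The condition $m_\lambda(1)=0$ is immediate: substituting $t=1$ gives $m_\lambda(1) = \lambda\cdot 1\cdot\ln 1 - (\lambda + 1 - \lambda)\ln(\lambda + 1 - \lambda) = 0 - \ln 1 = 0$.

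For convexity on $(0,\infty)$ I would differentiate twice. Writing $u(t) = \lambda t + 1 - \lambda$ (so $u'(t)=\lambda$), the first derivative is $m_\lambda'(t) = \lambda(\ln t + 1) - \lambda(\ln u(t) + 1) = \lambda\ln\!\bigl(t/u(t)\bigr)$, where the two $+\lambda$ terms cancel. Differentiating again gives $m_\lambda''(t) = \lambda\bigl(\tfrac1t - \tfrac{\lambda}{u(t)}\bigr)$, and collecting over a common denominator the numerator becomes $u(t) - \lambda t = 1 - \lambda$, so $m_\lambda''(t) = \frac{\lambda(1-\lambda)}{t\,(\lambda t + 1 - \lambda)}$. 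For $\lambda\in[0,1]$ and $t>0$ every factor in the numerator is non-negative and the denominator is strictly positive, so $m_\lambda''(t)\ge 0$ and $m_\lambda$ is convex. (At $\lambda\in\{0,1\}$ the numerator vanishes and $m_\lambda\equiv 0$, which is still convex.)

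For the last claim I would evaluate the $f$-divergence $D_{m_\lambda}(P\parallel Q) = \sum_{i\in\Omega} m_\lambda\!\bigl(P(i)/Q(i)\bigr)\,Q(i)$ on the finite sample space. The key algebraic step is that the argument $\lambda t + 1 - \lambda$ evaluated at $t=P(i)/Q(i)$ equals $M_\lambda(i)/Q(i)$ with $M_\lambda=\lambda P+(1-\lambda)Q$, so after multiplying through by $Q(i)$ each summand reads $\lambda P(i)\ln\frac{P(i)}{Q(i)} - M_\lambda(i)\ln\frac{M_\lambda(i)}{Q(i)}$. Expanding the logarithms and using the identity $M_\lambda(i)-\lambda P(i) = (1-\lambda)Q(i)$, the $\ln Q(i)$ cross-terms collapse and the summand simplifies to $\lambda P(i)\ln P(i) + (1-\lambda)Q(i)\ln Q(i) - M_\lambda(i)\ln M_\lambda(i)$. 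Summing over $i$ and recognizing each piece as a (negated) Shannon entropy yields $H(M_\lambda) - \lambda H(P) - (1-\lambda)H(Q)$, which is precisely the entropy form of $\wjs{\lambda}{P}{Q}$ recorded in \cref{sub:mil-gjs}; hence the $m_\lambda$-divergence coincides with $\wjs{\lambda}{\cdot}{\cdot}$.

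There is no genuine obstacle here, as the statement is routine; the one place to be careful is the bookkeeping in the final computation. One must track the $\ln Q(i)$ terms so that they cancel via $M_\lambda - \lambda P = (1-\lambda)Q$, and adopt the convention $0\ln 0 = 0$ so that the identities remain valid at the boundaries $\lambda\in\{0,1\}$ and at coordinates where $P(i)$ or $Q(i)$ vanishes.
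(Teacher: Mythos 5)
Your proposal is correct, and the first two claims are handled exactly as in the paper: direct substitution for $m_\lambda(1)=0$, and twice differentiating to get $m_\lambda''(t)=\frac{\lambda(1-\lambda)}{t(\lambda t+1-\lambda)}\ge 0$, which matches the paper's expression $\frac{\lambda(1-\lambda)}{\lambda t^2+(1-\lambda)t}$. The only real divergence is in the third claim. The paper expands both $D_{m_\lambda}(P\parallel Q)$ and the KL-based definition $\lambda\kl{P}{M_\lambda}+(1-\lambda)\kl{Q}{M_\lambda}$ in terms of Radon--Nikodym derivatives and shows they reduce to the same integrand $\lambda\ln\frac{dP}{dQ}\,dP-(\lambda\,dP+(1-\lambda)\,dQ)\ln\bigl(\lambda\frac{dP}{dQ}+1-\lambda\bigr)$; you instead reduce $D_{m_\lambda}$ to the entropy form $H(M_\lambda)-\lambda H(P)-(1-\lambda)H(Q)$ and invoke the equivalence recorded in the preliminaries. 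Both computations are sound and of comparable length, but note the tradeoff: your route depends on the entropy identity (stated in the paper without proof and meaningful only for discrete distributions) and on your explicit restriction to a finite sample space, whereas the paper's route works at the level of densities and so proves the lemma for general measures, matching the generality with which the $f$-divergence and the GJS divergence are defined. Since the downstream use (the approximation theorem) assumes finite $\Omega$, this costs nothing in context, but the paper's version of the third step is the strictly more general one.
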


We choose to approximate it via the squared Hellinger distance, which 
plays a central role in the construction of the hash family with desired 
properties.

The approximation guarantee is established in \cref{thm:approximation}. We show 
that the ratio of $ \wjs{\lambda}{P}{Q} $ to $ H^2(P,Q) $ is upper 
bounded by the function $ U(\lambda) $ and lower bounded by the function $ 
L(\lambda) $. Furthermore, \cref{thm:approximation} shows that $ U(\lambda)\le 
1 $, which implies that the squared Hellinger distance is an upper bound of the 
GJS divergence.
\begin{theorem}[{Proof in 
\cref{app:approximation}}]\label{thm:approximation}
	We assume that the sample space $ \Omega $ is finite. Let $ P $ and $ Q $ 
	be two different distributions on $ \Omega $.
	For every  $ 
	t>0 $ 
	and $ \lambda\in (0,1) $, we have
	\begin{equation*}
	L(\lambda)H^2(P, Q)\le 	\wjs{\lambda}{P}{Q}
	 \le  U(\lambda)H^2(P, Q)\le 
	H^2(P, 
	Q) ,
	\end{equation*}
	where $ L(\lambda) = 2 \min\{ \eta(\lambda), 
	\eta(1-\lambda) \} $, $ \eta(\lambda) = -\lambda \ln \lambda $ and $ 
	U(\lambda) = 
	\frac{2\lambda(1-\lambda)}{1-2\lambda}\ln \frac{1-\lambda}{\lambda} $.
\end{theorem}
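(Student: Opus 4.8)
The plan is to reduce the two-sided bound to a one-dimensional statement about the pointwise ratio of the two generating functions. By \cref{lem:wjs-as-f-divergence} and the definition of the squared Hellinger distance, for a finite sample space I can write $\wjs{\lambda}{P}{Q} = \sum_{i\in\Omega} Q(i)\, m_\lambda(P(i)/Q(i))$ and $H^2(P,Q) = \sum_{i\in\Omega} Q(i)\,\hel(P(i)/Q(i))$, where $\hel(t) = \tfrac12(\sqrt t - 1)^2$. Consequently, if I can show that the scalar ratio $\phi_\lambda(t) := m_\lambda(t)/\hel(t)$ satisfies $L(\lambda) \le \phi_\lambda(t) \le U(\lambda)$ for every $t>0$, then multiplying the pointwise bounds by $Q(i)>0$ and summing yields the claimed inequalities --- exactly the mechanism behind \cref{thm:main}. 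It therefore suffices to compute $\inf_t \phi_\lambda(t)$ and $\sup_t \phi_\lambda(t)$ and to check $U(\lambda)\le 1$. Using the symmetry $\wjs{\lambda}{P}{Q}=\wjs{1-\lambda}{Q}{P}$ together with symmetry of $H^2$ and the map $t\mapsto 1/t$, both $L$ and $U$ are invariant under $\lambda\mapsto 1-\lambda$, so I may assume $\lambda<\tfrac12$ and recover $\lambda=\tfrac12$ by continuity.

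Next I would pin down the behavior of $\phi_\lambda$ at the three distinguished locations. As $t\to 0^+$ the numerator tends to $-(1-\lambda)\ln(1-\lambda)$ while $\hel(t)\to\tfrac12$, giving $\phi_\lambda(0^+)=2\eta(1-\lambda)$; as $t\to\infty$ a first-order expansion of $\ln(\lambda t+1-\lambda)$ gives $\phi_\lambda(\infty)=2\eta(\lambda)$. At $t=1$ both $m_\lambda$ and $\hel$ vanish to first order, so a second-order Taylor expansion with $m_\lambda''(1)=\lambda(1-\lambda)$ and $\hel''(1)=\tfrac14$ shows the singularity is removable with $\phi_\lambda(1)=4\lambda(1-\lambda)$. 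These are the candidate extrema, and the remaining task is to show the infimum is the smaller of the two boundary limits while the supremum is attained at an interior point.

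The heart of the argument is a monotonicity analysis. Differentiating and clearing denominators, I would show $m_\lambda'(t)\hel(t)-m_\lambda(t)\hel'(t) = -\tfrac{\sqrt t-1}{2\sqrt t}\,G(t)$, where $G(t)=\lambda\sqrt t\ln t-(\lambda\sqrt t+1-\lambda)\ln(\lambda t+1-\lambda)$, so that $\sign\phi_\lambda'(t)=-\sign\big((\sqrt t-1)G(t)\big)$. A direct check shows $G(1)=0$ and, crucially, that $G$ vanishes at $t^\star=\big(\tfrac{1-\lambda}{\lambda}\big)^2$, because there $\lambda\sqrt{t^\star}+1-\lambda=\lambda t^\star+1-\lambda=\tfrac{1-\lambda}{\lambda}$ and the logarithmic terms balance. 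I would also verify $G(0^+)=-(1-\lambda)\ln(1-\lambda)>0$ and $G(t)\to+\infty$ as $t\to\infty$. The decisive step --- and the main obstacle --- is to prove that $1$ and $t^\star$ are the \emph{only} zeros of $G$, so that its sign pattern is $+,-,+$ on $(0,1),(1,t^\star),(t^\star,\infty)$; combined with the sign of $\sqrt t-1$, this makes $\phi_\lambda$ strictly increasing on $(0,t^\star)$ (the two sign reversals at $t=1$ cancel) and strictly decreasing on $(t^\star,\infty)$. Establishing this zero count rigorously is the delicate part: I expect to do it by examining $G'$, or by rewriting $G$ after the substitution $u=\sqrt t$, and bounding the number of its critical points, showing $G$ can change monotonicity only a controlled number of times so that, with positive values at both ends, it cannot have more than the two interior sign changes already located.

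Granting unimodality, the supremum of $\phi_\lambda$ is $\phi_\lambda(t^\star)$ and the infimum is $\min\{\phi_\lambda(0^+),\phi_\lambda(\infty)\}=2\min\{\eta(\lambda),\eta(1-\lambda)\}=L(\lambda)$. Evaluating at $t^\star$, I would use $\ln t^\star=2\ln\tfrac{1-\lambda}{\lambda}$, $\lambda t^\star+1-\lambda=\tfrac{1-\lambda}{\lambda}$, and $\hel(t^\star)=\tfrac{(1-2\lambda)^2}{2\lambda^2}$ to obtain $m_\lambda(t^\star)=\tfrac{(1-\lambda)(1-2\lambda)}{\lambda}\ln\tfrac{1-\lambda}{\lambda}$, whence $\phi_\lambda(t^\star)=\tfrac{2\lambda(1-\lambda)}{1-2\lambda}\ln\tfrac{1-\lambda}{\lambda}=U(\lambda)$. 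Finally, for $U(\lambda)\le 1$ I would substitute $x=\tfrac{1-\lambda}{\lambda}$ to rewrite $U(\lambda)=\tfrac{2x\ln x}{x^2-1}$ and reduce the claim to the elementary inequality $2x\ln x\le x^2-1$ for $x>0$, which holds because the difference vanishes to second order at $x=1$ and its second derivative has the right sign. This yields $U(\lambda)\le 1$ with equality only at $\lambda=\tfrac12$, completing the chain $L(\lambda)H^2(P,Q)\le\wjs{\lambda}{P}{Q}\le U(\lambda)H^2(P,Q)\le H^2(P,Q)$.
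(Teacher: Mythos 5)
Your outline follows the paper's own route: the paper likewise reduces everything to the scalar ratio $\kappa_\lambda(t)=m_\lambda(t)/\hel(t)$ (its \cref{lem:bound}, fed into Sason's comparison theorem, \cref{thm:sason}, rather than your direct term-by-term summation --- a cosmetic difference), likewise locates the maximum at $t^\star=\left(\frac{1-\lambda}{\lambda}\right)^2$ with value $U(\lambda)$, and likewise obtains $L(\lambda)$ as the smaller of the two boundary limits $2\eta(\lambda)$ and $2\eta(1-\lambda)$. Your limit computations at $0^+$, $1$, $\infty$, the evaluation at $t^\star$, the derivative factorization through $G$, and the symmetry reduction to $\lambda\le 1/2$ all check out and agree with the paper.

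The genuine gap is the step you yourself flag as decisive: proving that $1$ and $t^\star$ are the \emph{only} zeros of $G$, equivalently that $\kappa_\lambda$ increases on $(0,t^\star)$ and decreases on $(t^\star,\infty)$. You say you ``expect'' to establish this by examining $G'$ and bounding its critical points, but no argument is actually given, and this is exactly where the entire effort of the paper's proof of \cref{lem:bound} is spent: it differentiates two more times (its $f_2$ and $f_3$), observes that $f_3$ is a strictly increasing cubic in $\sqrt t$, locates the unique roots $\rho(\lambda),\rho_1(\lambda),\rho_2(\lambda)$ on $(1,t^\star)$, and propagates the sign information back up through a chain of one-variable inequalities (its $g, g_1, g_2, g_3$), with a separate treatment of $\lambda=1/2$. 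Without executing this (or an equivalent) zero-counting argument, the unimodality --- and hence both $\sup_t\kappa_\lambda = U(\lambda)$ and $\inf_t\kappa_\lambda = L(\lambda)$ --- remains unproven. Two smaller slips: your justification of $G(t^\star)=0$ invokes the false identity $\lambda\sqrt{t^\star}+1-\lambda=\lambda t^\star+1-\lambda$ (the left side equals $2(1-\lambda)$, the right side $(1-\lambda)/\lambda$); the conclusion is nevertheless true because $\ln t^\star=2\ln\frac{1-\lambda}{\lambda}$ makes the two logarithmic products equal, so this is repairable. Also, the inequality $2x\ln x\le x^2-1$ is false for $x<1$ (the difference changes sign with $x-1$); it holds for $x\ge 1$, which suffices under your standing assumption $\lambda\le 1/2$, but the claim ``for $x>0$'' should be restricted accordingly.
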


We show \cref{thm:approximation} by showing a two-sided approximation result regarding $ m_\lambda $ and $ \hel $. This result might be of independent interest for other machine learning tasks, say, approximate information-theoretic clustering~\cite{chaudhuri2008finding}.
\begin{lemma}[Proof in \cref{app:bound}]\label{lem:bound}
	Define $ \kappa_\lambda(t) = \frac{m_\lambda(t)}{\hel(t)} $. For every 
	$ 
	t>0 $ 
	and $ \lambda\in (0,1) $, we have
		$
	\kappa_\lambda(t) = \kappa_{1-\lambda}(1/t)
	$ and $
	\kappa_\lambda(t) \in [L(\lambda), U(\lambda) ]
	$.
\end{lemma}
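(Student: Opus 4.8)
My plan is to handle the symmetry relation first and then the two‑sided bound, which is where essentially all the work lies. For $\kappa_\lambda(t)=\kappa_{1-\lambda}(1/t)$ I would verify the two elementary identities $\hel(1/t)=\hel(t)/t$ and $m_\lambda(t)=t\,m_{1-\lambda}(1/t)$ by direct substitution: the first is immediate from $(\sqrt{1/t}-1)^2=(\sqrt t-1)^2/t$, and the second follows after writing $\frac{1-\lambda}{t}+\lambda=\frac{\lambda t+1-\lambda}{t}$ and collecting the $\ln t$ terms. Dividing the two identities gives the claim. Since $L(\lambda)=L(1-\lambda)$ and $U(\lambda)=U(1-\lambda)$ (a one‑line check) and $t\mapsto 1/t$ is a bijection of $(0,\infty)$, this symmetry lets me assume $\lambda<\tfrac12$ throughout the bound, with $\lambda=\tfrac12$ following by continuity and $\lambda>\tfrac12$ by the symmetry itself.

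To bound $\kappa_\lambda$ I study it as a one‑variable function. First I record the endpoint limits $\kappa_\lambda(0^+)=2\eta(1-\lambda)$ and $\kappa_\lambda(\infty)=2\eta(\lambda)$, which come from $m_\lambda(0^+)=\eta(1-\lambda)$, $\hel(0^+)=\tfrac12$, and the leading behaviour $m_\lambda(t)\sim\eta(\lambda)\,t$, $\hel(t)\sim t/2$ as $t\to\infty$; thus $L(\lambda)=\min\{\kappa_\lambda(0^+),\kappa_\lambda(\infty)\}$ is exactly the smaller endpoint. Using $\hel(t)/\hel'(t)=t-\sqrt t$, for $t\neq1$ the sign of $\kappa_\lambda'(t)$ equals the sign of $\hel'(t)\,R(t)$, where $R(t)=(t-\sqrt t)m_\lambda'(t)-m_\lambda(t)$; substituting $m_\lambda'(t)=\lambda\ln\frac{t}{\lambda t+1-\lambda}$ and simplifying gives the closed form $R(t)=(1-\lambda+\lambda\sqrt t)\ln(\lambda t+1-\lambda)-\lambda\sqrt t\ln t$. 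Since $\hel'$ is negative on $(0,1)$ and positive on $(1,\infty)$, the whole problem reduces to determining the sign of $R$.

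The crux is to show $R$ has the sign pattern $-,+,-$. I would first guess and verify the interior critical point $t^\ast=\big(\tfrac{1-\lambda}{\lambda}\big)^2$: here $\sqrt{t^\ast}=\tfrac{1-\lambda}{\lambda}$ and $\lambda t^\ast+1-\lambda=\tfrac{1-\lambda}{\lambda}$, from which $R(t^\ast)=0$ and $\kappa_\lambda(t^\ast)=\frac{2\lambda(1-\lambda)}{(1-\lambda)-\lambda}\ln\frac{1-\lambda}{\lambda}=U(\lambda)$ after a short computation. To see $t=1$ and $t^\ast$ are the only sign changes, I substitute $s=\sqrt t$ to get $\tilde R(s)=(1-\lambda+\lambda s)\ln(\lambda s^2+1-\lambda)-2\lambda s\ln s$ with $\tilde R(1)=0$, and write $\tilde R'(s)=\lambda\,G(s)$ for the explicit $G(s)=\ln\frac{\lambda s^2+1-\lambda}{s^2}+\frac{2(1-\lambda)(s-1)}{\lambda s^2+1-\lambda}$, which satisfies $G(1)=0$. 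Differentiating once more collapses to the clean factorization
\[
G'(s)=\frac{2(1-\lambda)(s-1)\big((1-\lambda)-\lambda s^2\big)}{s\,(\lambda s^2+1-\lambda)^2}.
\]
Finding this factorization is the key step and I expect it to be the main obstacle. It shows $G$ decreases on $(0,1)$, increases up to $s=\sqrt{(1-\lambda)/\lambda}$, and decreases thereafter; combined with $G(0^+)=+\infty$, $G(1)=0$, and $G(\infty)=\ln\lambda<0$, this forces $G\ge0$ on an initial interval and $G<0$ beyond a single point, so $\tilde R$ is unimodal (increasing then decreasing). A unimodal function has at most two zeros, and we have exhibited exactly two, at $s=1$ and $s=(1-\lambda)/\lambda$; together with $\tilde R(0^+)=(1-\lambda)\ln(1-\lambda)<0$ and $\tilde R(\infty)=-\infty$ this pins the sign of $\tilde R$, hence of $R$, to $-,+,-$ on $(0,1)$, $(1,t^\ast)$, $(t^\ast,\infty)$.

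Finally I combine the pieces: on $(0,1)$ the factors $\hel'<0$ and $R<0$ give $\kappa_\lambda'>0$, on $(1,t^\ast)$ both factors are positive, and on $(t^\ast,\infty)$ they have opposite sign, so $\kappa_\lambda$ increases on $(0,t^\ast)$ and decreases on $(t^\ast,\infty)$. Its maximum is therefore $\kappa_\lambda(t^\ast)=U(\lambda)$ and its infimum is the smaller endpoint $\min\{\kappa_\lambda(0^+),\kappa_\lambda(\infty)\}=L(\lambda)$, giving $\kappa_\lambda(t)\in[L(\lambda),U(\lambda)]$ for all $t>0$; the cases $\lambda\ge\tfrac12$ follow as noted.
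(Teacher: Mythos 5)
Your proposal is correct, and it shares the paper's overall skeleton: use the symmetry $\kappa_\lambda(t)=\kappa_{1-\lambda}(1/t)$ to reduce to $\lambda\le 1/2$, analyze the sign of $\kappa_\lambda'$, show the maximum is attained at $t^*=\left(\frac{1-\lambda}{\lambda}\right)^2$ with value $U(\lambda)$, and identify $L(\lambda)$ as the smaller of the two endpoint limits $2\eta(1-\lambda)$ (at $0^+$) and $2\eta(\lambda)$ (at $\infty$). Indeed, your $R$ is exactly half the paper's numerator $f_1$, and your $G$ is the paper's $-f_2$ normalized by $\lambda t+1-\lambda$. Where you genuinely diverge is in how the sign of this numerator is pinned down, and your route is cleaner. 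The paper differentiates $f_2$ twice more, arriving at a cubic $f_3$ in $\sqrt{t}$ whose relevant root $\rho(\lambda)$ has no closed form; it must then subdivide $(1,t^*)$ and reason through implicitly defined roots $\rho_1(\lambda)$ of $f_2'$ and $\rho_2(\lambda)$ of $f_2$, with auxiliary functions $g,g_1,g_2,g_3$ of $\lambda$ to fix endpoint signs, and it handles $\lambda=1/2$ by a separate explicit computation ($f_5$, $f_6$). You instead substitute $s=\sqrt{t}$ and normalize before differentiating, so that $G'$ factors in closed form as $\frac{2(1-\lambda)(s-1)\left((1-\lambda)-\lambda s^2\right)}{s(\lambda s^2+1-\lambda)^2}$ with both roots explicit; this yields unimodality of $\tilde R$, and since you exhibit its only two zeros, at $s=1$ and $s=\frac{1-\lambda}{\lambda}$, the sign pattern (negative, positive, negative) follows with no interval-by-interval case analysis, and $\lambda=1/2$ comes free by continuity in $\lambda$ of both $\kappa_\lambda(t)$ and the bounds. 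What your argument buys is explicitness and economy: no implicit roots, one unified monotonicity argument on $(0,\infty)$, and no separate equal-weights case. What it costs is having to spot the right normalization of $f_2$ before differentiating, which is precisely the factorization you correctly flag as the crux; the paper's more pedestrian repeated differentiation avoids that insight at the price of a much longer case analysis. I verified your identities ($\hel(1/t)=\hel(t)/t$, $m_\lambda(t)=t\,m_{1-\lambda}(1/t)$, the closed forms of $R$, $G$, and $G'$, and $\kappa_\lambda(t^*)=U(\lambda)$), and they all check out.
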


\begin{figure}[thb]
	\centering
	\includegraphics[width=0.35\columnwidth]{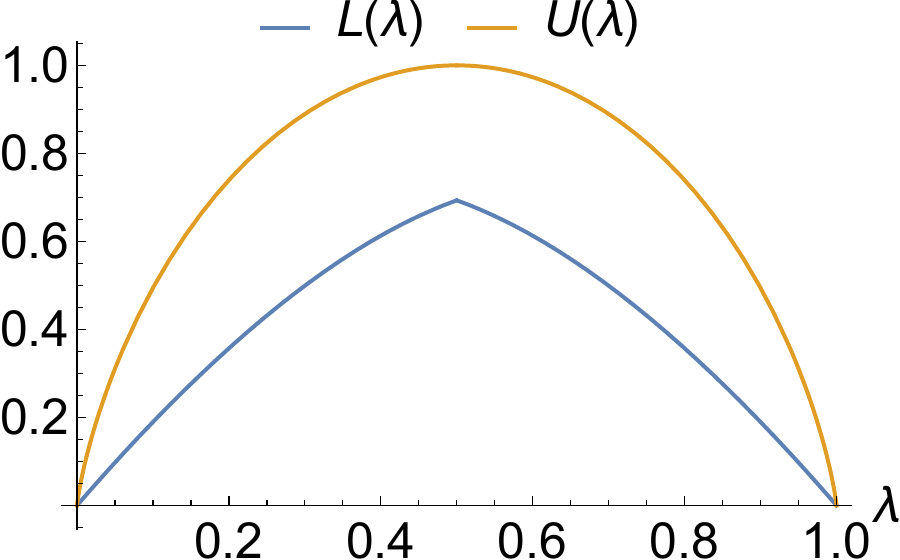}
	\caption{Upper and lower functions $ U(\lambda) $ and $ L(\lambda) 
		$.\label{fig:approximation_ratio}}
\end{figure}

We illustrate the upper and lower bound functions $ U(\lambda) $ and $ 
L(\lambda) $ in \cref{fig:approximation_ratio}. Recall that if $ \lambda=1/2 $, 
the generalized Jensen-Shannon divergence reduces to the usual Jensen-Shannon 
divergence. \cref{thm:approximation} yields the approximation 
guarantee
$
0.69< \ln 2 \le \frac{\js{P}{Q}}{H^2(P, Q)} \le 1$.

If the common sample space $ \Omega $ with which the 
two distributions $ P $ and $ Q $ are associated is finite, one can  
identify $ P $ and Q with the $ |\Omega| $-dimensional vectors $[P(i)]_{i\in 
\Omega}$ and $ [Q(i)]_{i\in \Omega} $, respectively.
In this case, 
$
H^2(P, Q) = \frac{1}{2} \| \sqrt{P}-\sqrt{Q} \|_2^2$,
which is exactly half of the squared $ L^2 $ distance between the two vectors $ 
\sqrt{P} \triangleq [\sqrt{P(i)}]_{i\in \Omega} $ and $ \sqrt{Q} \triangleq 
[\sqrt{Q(i)}]_{i\in \Omega} $. Therefore, the squared Hellinger distance can be 
endowed with the $ L^2 $-LSH family~\citep{datar2004locality} applied to the 
square root of the vector. In light of this, 
the locality-sensitive hash function that we propose for the generalized Jensen-Shannon  divergence is 
\begin{equation}\label{eq:hash_function}
 h_{\mathbf{a}, b}(P) = \left\lceil  \frac{\mathbf{a}\cdot \sqrt{P} +b }{r} 
 \right\rceil,
\end{equation}
where $ \mathbf{a}\sim \mathcal{N}(0, I) $ is a $ |\Omega| $-dimensional standard normal random vector, $ 
\cdot $ denotes the inner product, 
$ b $ is uniformly at random on $ [0, r] $, and $ r $ is a positive real 
number. 

\begin{theorem}[{Proof in \cref{sub:proof_wjs}}]\label{thm:lsh-family}
Let $ c= \| 
\sqrt{P}-\sqrt{Q} \|_2$ and $ f_2 $ be the probability density function of the 
absolute value of the standard normal distribution.	The hash functions $ 
\{h_{\mathbf{a},b}\}  $ defined in \eqref{eq:hash_function} form a 
$ (R,c^2\frac{U(\lambda)}{L(\lambda)}R,p_1,p_2) 
	$-sensitive 
	family for the 
	generalized Jensen-Shannon divergence with parameter $ \lambda $, where $ R>0 
	$, $ p_1 = p(1) $, $ p_2=p(c) $, and $
		p(u) = \int_0^r \frac{1}{u}f_2(t/u)(1-t/r)dt$.
\end{theorem}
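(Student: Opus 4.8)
The plan is to recognize $\{h_{\mathbf{a},b}\}$ as the $L^2$ locality-sensitive family of \citet{datar2004locality} applied to the square-root embedding $P\mapsto\sqrt P$, and then to transport its collision guarantee from the $L^2$ distance to the generalized Jensen-Shannon divergence by means of the two-sided bound in \cref{thm:approximation}. I would first compute the collision probability as a function of the $L^2$ distance $d:=\lVert\sqrt P-\sqrt Q\rVert_2$ between the embedded points. Because $\mathbf{a}\sim\mathcal N(0,I)$, the projection gap $\mathbf{a}\cdot(\sqrt P-\sqrt Q)$ is distributed as $\mathcal N(0,d^2)$, so its absolute value has density $t\mapsto\tfrac1d f_2(t/d)$. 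A collision occurs exactly when the two offset projections land in a common bucket of width $r$; averaging over $b\sim\mathrm{Unif}[0,r]$ yields the standard closed form
\[
\pr_{\mathbf a,b}[\,h_{\mathbf a,b}(P)=h_{\mathbf a,b}(Q)\,]=\int_0^r\frac1d\,f_2(t/d)\Bigl(1-\frac{t}{r}\Bigr)\,dt=p(d).
\]
The only analytic input I need beyond this identity is that $p$ is strictly decreasing in $d$, which is proved in \citep{datar2004locality}; this is what will make $p_1>p_2$.

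Next I would connect the three quantities. By definition $H^2(P,Q)=\tfrac12\lVert\sqrt P-\sqrt Q\rVert_2^2=\tfrac12 d^2$, so \cref{thm:approximation} reads $\tfrac{L(\lambda)}{2}d^2\le\wjs{\lambda}{P}{Q}\le\tfrac{U(\lambda)}{2}d^2$. For the close case I use the \emph{lower} bound: $\wjs{\lambda}{P}{Q}\le R$ forces $\tfrac{L(\lambda)}{2}d^2\le R$, i.e.\ $d\le\sqrt{2R/L(\lambda)}$, and monotonicity of $p$ gives a collision probability at least $p\bigl(\sqrt{2R/L(\lambda)}\bigr)$. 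For the far case I use the \emph{upper} bound: $\wjs{\lambda}{P}{Q}>c^2\tfrac{U(\lambda)}{L(\lambda)}R$ forces $\tfrac{U(\lambda)}{2}d^2>c^2\tfrac{U(\lambda)}{L(\lambda)}R$, i.e.\ $d>c\sqrt{2R/L(\lambda)}$, so the collision probability is at most $p\bigl(c\sqrt{2R/L(\lambda)}\bigr)$. Measuring the close radius in the natural scale $\sqrt{2R/L(\lambda)}=1$ then turns the two thresholds into $1$ and $c$, so that $p_1=p(1)$ and $p_2=p(c)$, and the pair of implications is precisely the $\bigl(R,\,c^2\tfrac{U(\lambda)}{L(\lambda)}R,\,p_1,\,p_2\bigr)$-sensitivity statement.

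I expect essentially all of the difficulty to be outsourced: the nontrivial estimate lives in \cref{thm:approximation}, already available, and the closed form and monotonicity of $p$ come from \citep{datar2004locality}. The one place that genuinely demands care is the orientation of the two-sided bound — the \emph{lower} approximation $L(\lambda)H^2\le\wjs{\lambda}{P}{Q}$ must control the close radius while the \emph{upper} approximation $\wjs{\lambda}{P}{Q}\le U(\lambda)H^2$ must control the far radius, since swapping them would destroy the exact ratio $c^2 U(\lambda)/L(\lambda)$ between $r_1$ and $r_2$. Once the two radii are pinned to the $L^2$ distances $1$ and $c$, strict monotonicity of $p$ delivers $p_1>p_2$ and closes the argument.
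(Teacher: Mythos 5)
Your proposal is correct and follows essentially the same route as the paper's own proof: both reduce to the $L^2$ LSH family of \citet{datar2004locality} applied to $\sqrt{P}$, use the lower bound $L(\lambda)H^2\le\wjs{\lambda}{P}{Q}$ from \cref{thm:approximation} to bound the close radius by $R_1=\sqrt{2R/L(\lambda)}$, and use the upper bound $\wjs{\lambda}{P}{Q}\le U(\lambda)H^2$ to force the far radius past $cR_1$. Your explicit derivation of the collision-probability formula and the normalization $R_1=1$ only spell out details the paper delegates to \citep{datar2004locality}.
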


\subsection{Triangular Discrimination\label{sub:triangular_discrimination}}
Recall that  triangular discrimination is the $ \delta $-divergence, where $ 
\delta(t) = \frac{(t-1)^2}{t+1} $.
As shown in the proof of \cref{thm:lsh_family_triangular_discrimination} 
(\cref{sub:proof_triangular_discrimination}), the function $ \delta $ can be 
approximated by the function $ \hel(t) $ that defines the squared Hellinger 
distance
$
1\le	\frac{\delta(t)}{\hel(t)} \le 2$.
The squared Hellinger distance can be sketched via $ L^2 $-LSH after taking the 
square root, as exemplified in \cref{sub:weighted_js}.
By \cref{thm:main}, the LSH family for the square Hellinger distance also forms 
an LSH family for the triangular discrimination.
\cref{thm:lsh_family_triangular_discrimination} shows that the LSH family 
defined in  \eqref{eq:hash_function} form a 
$ (R,2c^2R,p_1,p_2) 
$-sensitive 
family for 
triangular discrimination.

\begin{theorem}[{Proof in 
\cref{sub:proof_triangular_discrimination}}]\label{thm:lsh_family_triangular_discrimination}
	Let $ c= \| 
	\sqrt{P}-\sqrt{Q} \|_2$ and $ f_2 $ be the probability density function of 
	the 
	absolute value of the standard normal distribution.	The hash functions $ 
	\{h_{\mathbf{a},b}\}  $ defined in \eqref{eq:hash_function} form a 
	$ (R,2c^2R,p_1,p_2) 
	$-sensitive 
	family for 
	triangular discrimination, where $ R>0 
	$, $ p_1 = p(1) $, $ p_2=p(c) $, and $
	p(u) = \int_0^r \frac{1}{u}f_2(t/u)(1-t/r)dt$.
\end{theorem}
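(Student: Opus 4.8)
The plan is to observe that the hash family \eqref{eq:hash_function} is exactly the $L^2$-LSH of \citet{datar2004locality} applied to the square-root embedding $P\mapsto\sqrt P$, and then to carry its guarantee from the squared Hellinger distance over to triangular discrimination using the pointwise comparison of the generating functions $\delta$ and $\hel$ together with \cref{thm:main}. Since triangular discrimination and $H^2$ are both symmetric $f$-divergences built from the same embedding used for the generalized Jensen--Shannon divergence, the argument runs in close parallel to \cref{thm:lsh-family}; only the function-ratio step changes.

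First I would compute the collision probability. For $\mathbf{a}\sim\mathcal{N}(0,I)$ the gap $\mathbf{a}\cdot(\sqrt P-\sqrt Q)$ is distributed as $c\,Z$ with $Z\sim\mathcal{N}(0,1)$ and $c=\|\sqrt P-\sqrt Q\|_2$; conditioned on $|c\,Z|=t<r$, the two ceilings in \eqref{eq:hash_function} coincide with probability $1-t/r$ over the uniform offset $b$. Integrating against the density $\frac1c f_2(t/c)$ of $|c\,Z|$ gives the collision probability $\int_0^r\frac1c f_2(t/c)(1-t/r)\,dt=p(c)$, that is, the stated function evaluated at the $L^2$ distance. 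Substituting $s=t/c$ shows $p(c)=\int_0^{r/c}f_2(s)(1-cs/r)\,ds$ is strictly decreasing in $c$, so a smaller $L^2$ distance produces a larger collision probability, which is what the two sensitivity inequalities will require.

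The heart of the proof, and the step I expect to be the main obstacle, is the uniform comparison $\hel(t)\le\delta(t)\le 2\,\hel(t)$, equivalently $1\le\delta(t)/\hel(t)\le 2$, for all $t>0$. I would substitute $s=\sqrt t$ and use $t-1=(s-1)(s+1)$ to collapse the ratio to a multiple of $(s+1)^2/(s^2+1)$, the multiple being fixed by the normalization of $\hel$; its extrema, attained at $s=1$ and in the limits $s\to0,\infty$, then pin the ratio to an interval of multiplicative width $2$, giving $L=1$ and $U=2$ in the notation of \cref{thm:main}. The care here is to verify the inequality uniformly in $t$ — in particular across the removable singularity at $t=1$, where $\delta$ and $\hel$ both vanish and the ratio must be read off as a limit — rather than at sample points only. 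Because this comparison holds for every $t>0$, the constraint $\beta_0\le P(i)/Q(i)\le\beta_0^{-1}$ of \cref{thm:main} is vacuous here and may be dropped, and \cref{thm:main} upgrades any $(r_1,r_2,p_1,p_2)$-sensitive family for $H^2$ into an $(r_1,2r_2,p_1,p_2)$-sensitive family for triangular discrimination.

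Finally I would assemble the thresholds. Because $H^2(P,Q)=\tfrac12\|\sqrt P-\sqrt Q\|_2^2=\tfrac12 c^2$, the collision analysis of the second paragraph makes the family $(r_1,r_2,p(\sqrt{2r_1}),p(\sqrt{2r_2}))$-sensitive for $H^2$; taking the Hellinger radii with $\sqrt{2r_1}=1$ and $\sqrt{2r_2}=c$ gives $p_1=p(1)$ and $p_2=p(c)$ and forces $r_1=\tfrac12$, $r_2=\tfrac12 c^2$. Applying \cref{thm:main} with $L=1$, $U=2$ sends $(r_1,r_2)$ to $(r_1,2r_2)$ for triangular discrimination, and writing $R=r_1$ this reads $(R,2c^2R)$, since $2r_2=c^2=2c^2R$. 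The monotonicity of $p$ established above then certifies that the collision probability stays at least $p_1$ for $\delta$-close pairs and at most $p_2$ for $\delta$-far pairs, which is exactly the claimed $(R,2c^2R,p_1,p_2)$-sensitivity.
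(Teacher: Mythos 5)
Your proposal is correct and follows the paper's overall strategy---compare the generating functions $\delta$ and $\hel$ pointwise, then transfer the $L^2$-LSH guarantee through the square-root embedding $P\mapsto\sqrt P$---but your derivation of the key comparison step takes a genuinely different route. The paper proves the two-sided bound by calculus: it computes $\kappa'(t)=\frac{1-t}{\sqrt t\,(t+1)^2}$ for $\kappa=\delta/\hel$, argues the sign change at $t=1$, and evaluates $\kappa(1)$ together with the limits at $0$ and $\infty$. You instead collapse the ratio algebraically via $s=\sqrt t$ to a multiple of $(s+1)^2/(s^2+1)$, which makes the extrema (maximum at $s=1$, infimum in the limits $s\to 0,\infty$) and the cancellation of the removable singularity at $t=1$ immediate with no differentiation; this is shorter and arguably cleaner. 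Two further differences are presentational: you re-derive the $L^2$-LSH collision probability of \citet{datar2004locality} and its monotonicity in the distance, which the paper simply cites, and you route the transfer through \cref{thm:main}, whereas the paper's appendix applies Sason's inequality (\cref{thm:sason}) inline and translates the thresholds by hand---the two are equivalent, and the main text indeed attributes the logic to \cref{thm:main}. One shared wrinkle to be aware of: under the paper's own normalization $\hel(t)=\frac12(\sqrt t-1)^2$ (so $H^2=\frac12\|\sqrt P-\sqrt Q\|_2^2$), the multiple in your collapse is $2$, so the ratio actually lies in $[2,4]$ rather than $[1,2]$; the paper's appendix makes the same slip, since its formulas for $\kappa$ tacitly use $\hel(t)=(\sqrt t-1)^2$. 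Neither proof is damaged, because only the ratio $U/L=2$ enters the final form $(R,2c^2R,p_1,p_2)$ and the factor of two is absorbed into the scale of $R$---and in fact you handle this bookkeeping more honestly than the paper by observing that the fixed probabilities $p_1=p(1)$, $p_2=p(c)$ pin down one particular value of $R$ (it becomes $R=1$ rather than $R=\frac12$ once the constants are corrected), whereas the paper states the result for arbitrary $R>0$ even though its collision probabilities only match one choice of scale.
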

 
\section{\kr-LSH for Mutual Information Loss}
In this section, we first show that the mutual information loss is a \kr 
kernel. 
Then
we propose \emph{\kr-LSH}, an asymmetric LSH 
method~\cite{shrivastava2014asymmetric} for mutual information 
loss. We would like 
to remark that this method can be easily extended to other \kr kernels, 
provided that the associated positive definite kernels allow an explicit 
feature map. 

\subsection{Mutual Information Loss is a \kr Kernel}

Recall that in \cref{sub:mil-gjs} we assume a joint distribution $ p(X,C) $ 
whose support is $ \cX\times \cC $. Let $ x,y\in \cX $ be represented by 
 $\bfx=[p(c,x):c\in \cC]\in [0,1]^{|\cC|}$ and $\bfy=[p(c,y):c\in \cC]\in 
 [0,1]^{|\cC|}$, respectively.
We consider the mutual information loss of merging $ x  $ and $ y $, which is given by
$ I(X;C)-I(\pi_{x,y}(X);C)$. 

\begin{theorem}[Proof in \cref{app:krein_kernel}]\label{thm:krein_kernel}
	The mutual information loss $ \mil(\bfx,\bfy) $ is a \kr kernel on $ 
	[0,1]^{|\cC|} $. In other words, there exist two positive definite kernels 
	$ K_1 $ and $ K_2 $ on $ [0,1]^{|\cC|} $ such that $ 
	\mil(\bfx,\bfy)=K_1(\bfx,\bfy)-K_2(\bfx,\bfy) $. To be explicit, we set $ 
	K_1(\bfx,\bfy) = k(\sum_{c\in \cC}p(c,x), \sum_{c\in \cC}p(c,y)) $ and $ 
	K_2(\bfx,\bfy)=\sum_{c\in \cC} k(p(c,x),p(c,y)) $, where
	$
	k(a,b) = a\ln\frac{a}{a+b}+b\ln\frac{b}{a+b}
	 $.
\end{theorem}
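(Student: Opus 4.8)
The plan has two stages: an exact algebraic rewriting of $\mil$ as a difference of two kernels built from the single scalar function $\phi(t)=t\ln t$, followed by a proof that each kernel is positive definite through an integral representation. \emph{Step 1 (exact decomposition).} I would write $I(X;C)=\sum_{x',c}p(c,x')\ln\frac{p(c,x')}{p(x')p(c)}$ and note that $\pi_{x,y}$ changes only the terms indexed by $x,y$, merging them into one term for $z$ with $p(c,z)=p(c,x)+p(c,y)$ and $p(z)=p(x)+p(y)$. Because the label marginal $p(c)=\sum_{x'}p(c,x')$ is unchanged by the merge, every $\ln p(c)$ contribution enters with coefficient $p(c,x)+p(c,y)-p(c,z)=0$ and cancels. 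Setting $\phi(t)=t\ln t$ and $S_x=\sum_c p(c,x)=p(x)$, what survives is
\[
\mil(\bfx,\bfy)=\Big[\phi(S_x{+}S_y)-\phi(S_x)-\phi(S_y)\Big]-\sum_{c\in\cC}\Big[\phi(p(c,x){+}p(c,y))-\phi(p(c,x))-\phi(p(c,y))\Big].
\]
Introducing the scalar building block $k(a,b)=\phi(a+b)-\phi(a)-\phi(b)=a\ln\frac{a+b}{a}+b\ln\frac{a+b}{b}$, this is exactly $K_1(\bfx,\bfy)-K_2(\bfx,\bfy)$ with $K_1$ the function of the coordinate sums and $K_2$ the sum over $c$. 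As a consistency check, this matches $\mil=(p(x)+p(y))\,\wjs{\lambda}{P}{Q}$ with $\lambda=p(x)/(p(x)+p(y))$. This step is routine bookkeeping and should present no difficulty.

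\emph{Step 2 (reduction to a scalar kernel).} It remains to show $K_1,K_2$ are positive definite on $[0,1]^{|\cC|}$. Both are assembled from $k$: $K_2$ is the finite nonnegative sum over $c\in\cC$ of the kernels $(\bfx,\bfy)\mapsto k(p(c,x),p(c,y))$, and $K_1$ is obtained from $k$ by precomposing with the fixed linear aggregation $\bfx\mapsto\sum_c p(c,x)$. Since positive definiteness is preserved under finite nonnegative sums and under precomposition with a fixed map, it suffices to prove that the scalar two-argument function $k(a,b)$ is a positive definite kernel on $[0,\infty)$.

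\emph{Step 3 (positive definiteness of $k$ --- the crux).} This is the step I expect to be the real obstacle, since $k$ is built from logarithms rather than from an obvious inner product. I would settle it via the integral representation
\[
k(a,b)=(a+b)\ln(a+b)-a\ln a-b\ln b=\int_0^\infty\frac{(1-e^{-au})(1-e^{-bu})}{u^2}\,du,
\]
valid for all $a,b\ge0$; the integral converges since the integrand is $O(1)$ as $u\to0$ and $O(u^{-2})$ as $u\to\infty$. To derive it I would insert $\ln z=\int_0^\infty\frac{e^{-u}-e^{-zu}}{u}\,du$ into each $\phi$, observe that the $e^{-u}$ contributions cancel because $(a+b)-a-b=0$, and then integrate by parts once, with boundary terms vanishing as $(1-e^{-au})(1-e^{-bu})\sim ab\,u^2$ near $0$ and staying bounded at infinity. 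Once the representation is in hand, the feature map $\Phi_a(u)=1-e^{-au}$ into $L^2\!\big((0,\infty),u^{-2}\,du\big)$ gives $k(a,b)=\langle\Phi_a,\Phi_b\rangle$, so for any scalars $a_1,\dots,a_n\ge0$ and reals $\alpha_1,\dots,\alpha_n$,
\[
\sum_{i,j}\alpha_i\alpha_j\,k(a_i,a_j)=\int_0^\infty\frac{1}{u^2}\Big(\sum_i\alpha_i\,(1-e^{-a_iu})\Big)^2\,du\ge0.
\]
Hence $k$ is positive definite, and combined with Steps 1--2 this shows that $\mil=K_1-K_2$ exhibits the mutual information loss as a \kr kernel.
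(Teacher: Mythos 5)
Your proof is correct. Steps 1--2 coincide with the paper's argument: the same cancellation of the $\ln p(c)$ terms, the same reduction of $\mil$ to a scalar kernel $k(a,b)=\phi(a+b)-\phi(a)-\phi(b)$, and the same closure properties (finite sums, precomposition with a fixed map) to lift positive definiteness from $k$ to $K_1$ and $K_2$. One remark on signs: your $k(a,b)=a\ln\frac{a+b}{a}+b\ln\frac{a+b}{b}$ is the \emph{negative} of the $k$ printed in the statement, but yours is the consistent choice --- the paper's own proof also uses $k(a,b)=\eta(a)+\eta(b)-\eta(a+b)$ with $\eta(u)=-u\ln u$, and with the printed formula one would get $\mil=K_2-K_1$; you have in effect corrected a sign typo. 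Where you genuinely depart from the paper is Step 3. The paper proves positive definiteness of $k$ via homogeneous-kernel machinery: it forms the signature $\cK(\lambda)=k(e^{\lambda/2},e^{-\lambda/2})$, identifies it as the Fourier transform of the non-negative density $\rho(w)=\frac{2\sech(\pi w)}{1+4w^2}$, and invokes a Bochner-type lemma together with Lemma~1 of Vedaldi--Zisserman (\cref{lem:pd_k}), yielding the complex feature map $\Phi_w(x)=e^{-iw\ln x}\sqrt{x\rho(w)}$. You instead prove the real representation $k(a,b)=\int_0^\infty u^{-2}(1-e^{-au})(1-e^{-bu})\,dw$-free of Fourier analysis, via Frullani's integral and one integration by parts (the identity checks out: both sides vanish at $a=0$ and have $a$-derivative $\ln\frac{a+b}{a}$), exhibiting $k$ as a Gram matrix of the features $u\mapsto 1-e^{-au}$ in $L^2((0,\infty),u^{-2}du)$. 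Your route is more elementary and self-contained; what the paper's route buys is its specific feature map, whose exponentially decaying weight $\rho(w)$ drives the truncation bound of \cref{lem:truncation} and the discretization in \cref{alg:krein-lsh}, whereas your weight $u^{-2}$ decays only polynomially and would give an $O(1/t)$-type truncation error if one tried to build the same algorithm on it.
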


To prove \cref{thm:krein_kernel} and construct explicit feature maps for $ K_1 $ and $ K_2 $, we need the following lemma.

\begin{lemma}[Proof in \cref{app:pd_k}]\label{lem:pd_k}
	The kernel $ k $ is a positive definite kernel on $ [0,1] $. Moreover, it is endowed with the 
	following explicit feature map $ x\mapsto \Phi_w(x) $ such that 
	$ k(x,y) =  \int_\bR \Phi_w(x)^* \Phi_w(y) dw $, 
	 where $
	\Phi_w(x) \triangleq e^{-iw\ln(x)}\sqrt{x\frac{2\sech(\pi w)}{1+4w^2}}$ and $ \Phi_w(x)^* $ denotes the complex conjugate of $ \Phi_w(x) $.
\end{lemma}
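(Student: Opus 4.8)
The plan is to exploit the multiplicative scaling structure of $k$: after extracting a rank-one factor, positive definiteness of $k$ reduces to positive definiteness of a \emph{translation-invariant} kernel on $\bR$, which by Bochner's theorem is equivalent to nonnegativity of a single Fourier transform; the feature map $\Phi_w$ then drops out of Fourier inversion once that transform is identified with the stated density $\tfrac{2\sech(\pi w)}{1+4w^2}$, which is manifestly nonnegative.

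First I would record the algebraic identity $k(x,y)=x\ln x+y\ln y-(x+y)\ln(x+y)$ (valid for $x,y>0$, with the boundary handled by continuity). Substituting $x=\sqrt{xy}\,e^{u/2}$ and $y=\sqrt{xy}\,e^{-u/2}$ with $u=\ln(x/y)$, a direct computation shows that the two $\ln\sqrt{xy}$ contributions cancel and leaves
\[
k(x,y)=\sqrt{xy}\,\Psi(u),
\]
where $\Psi$ is an explicit even function whose leading $\tfrac{u}{2}e^{u/2}$ growth cancels, so $\Psi$ decays exponentially and lies in $L^1(\bR)$, making its Fourier transform well defined.

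Next, writing $x=e^\xi$ and $y=e^\eta$, the kernel becomes $k(e^\xi,e^\eta)=e^{\xi/2}e^{\eta/2}\,\Psi(\xi-\eta)$. The factor $e^{\xi/2}e^{\eta/2}=\phi(\xi)\phi(\eta)$ is a rank-one positive definite kernel, and the Schur (entrywise) product of any positive definite kernel with $\Psi(\xi-\eta)$ is again positive definite; hence it suffices to prove that the stationary kernel $(\xi,\eta)\mapsto\Psi(\xi-\eta)$ is positive definite, which by Bochner's theorem is equivalent to $\widehat{\Psi}\ge 0$. I would establish this through the Fourier representation
\[
\Psi(u)=\int_\bR e^{iwu}\,\frac{2\sech(\pi w)}{1+4w^2}\,dw .
\]
Since the spectral density is nonnegative, positive definiteness follows; and substituting back $u=\ln x-\ln y$ together with $\sqrt{xy}=\sqrt{x}\,\sqrt{y}$ gives $k(x,y)=\int_\bR \Phi_w(x)^*\Phi_w(y)\,dw$ with $\Phi_w(x)=e^{-iw\ln x}\sqrt{x\,\tfrac{2\sech(\pi w)}{1+4w^2}}$, exactly the claimed feature map.

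The main obstacle is the Fourier identity displayed above; everything else is bookkeeping. I see two routes. The first is contour integration of $\tfrac{2\sech(\pi w)}{1+4w^2}e^{iwu}$, closing in the half-plane selected by the sign of $u$; the subtlety is that the zero of $1+4w^2$ at $w=\pm i/2$ coincides with a pole of $\sech(\pi w)$ (because $\cosh(i\pi/2)=0$), producing a \emph{double} pole whose residue generates the non-rational $\ln(2\cosh(u/2))$ and the $u\sinh(u/2)$ terms, while the remaining simple poles at $w=i(n+\tfrac12)$ contribute a series that resums to the logarithm. The second, cleaner, route is the convolution theorem: using the classical pairs $\int_\bR e^{-|v|/2}e^{-iwv}\,dv=\tfrac{4}{1+4w^2}$ and $\int_\bR \sech(v/2)e^{-iwv}\,dv=2\pi\sech(\pi w)$, the density factors as a product of two transforms, so its inverse transform is the convolution $\tfrac12\int_\bR e^{-|u-v|/2}\sech(v/2)\,dv$. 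Splitting at $v=u$ and using the elementary antiderivatives $\int e^{v/2}\sech(v/2)\,dv=2\ln(1+e^{v})$ and $\int e^{-v/2}\sech(v/2)\,dv=2\ln\tfrac{e^{v}}{1+e^{v}}$ evaluates this to $2\cosh(u/2)\ln\!\bigl(2\cosh(u/2)\bigr)-u\sinh(u/2)$. Matching this closed form against $k(x,y)/\sqrt{xy}$ finishes the argument; the only delicate points are justifying convergence and the interchange of summation/integration, and carefully tracking the overall sign and normalization conventions.
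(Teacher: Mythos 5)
Your proposal is correct in substance and, at the structural level, follows the same spectral strategy as the paper: both reduce positive definiteness of the homogeneous kernel $k$ to positive definiteness of its one-dimensional signature $\cK(u)=k(e^{u/2},e^{-u/2})$ (your $\Psi$), identify the nonnegative spectral density $\rho(w)=\frac{2\sech(\pi w)}{1+4w^2}$, and read the feature map $\Phi_w$ off the resulting Fourier representation. Two differences are worth recording. First, where the paper cites Vedaldi--Zisserman's lemma (a homogeneous kernel is PD iff its signature is a PD function) together with a small Bochner-type lemma proved by expanding the quadratic form, you make the reduction self-contained: the rank-one factor $e^{\xi/2}e^{\eta/2}$ is PD, the stationary factor $\Psi(\xi-\eta)$ is PD once $\widehat{\Psi}\ge 0$, and the Schur product theorem combines them --- same mathematics, different packaging. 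Second, and this is a genuine addition, you actually \emph{prove} the central Fourier identity, which the paper merely asserts (``We have the following Fourier transform and inverse Fourier transform''). Your convolution route checks out: $\frac{4}{1+4w^2}$ and $2\pi\sech(\pi w)$ are indeed the transforms of $e^{-|v|/2}$ and $\sech(v/2)$, your antiderivatives are correct, and the convolution $\frac12\int_\bR e^{-|u-v|/2}\sech(v/2)\,dv$ evaluates to $2\cosh(u/2)\ln\left(2\cosh(u/2)\right)-u\sinh(u/2)$, which is exactly the paper's $\cK(u)$. Your write-up would therefore close a gap the paper leaves open.

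One caveat on signs, which you flagged but must resolve to finish. The closed form above is nonnegative, while the identity you recorded at the outset, $k(x,y)=x\ln x+y\ln y-(x+y)\ln(x+y)$, defines a nonpositive function (it equals $-2x\ln 2$ on the diagonal), which cannot be PD; with that orientation your final matching step fails by a sign, since the convolution evaluates to $-\Psi(u)$, not $\Psi(u)$. The discrepancy is inherited from the paper itself: the formula for $k$ in the theorem introducing the \kr decomposition is sign-reversed relative to what the appendix actually uses, namely $k(a,b)=\eta(a)+\eta(b)-\eta(a+b)$ with $\eta(t)=-t\ln t$, and the paper's stated signature $\cK$ is the signature of this latter (genuinely PD) kernel. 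Once you adopt that orientation, your argument goes through verbatim and proves the lemma as intended.
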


The map $ \Phi(x):w\mapsto \Phi_w(x) $ is called the \emph{feature map} of $ x 
$. The integral $ \int_\bR \Phi_w(x)^* \Phi_w(y) dw $ is also denoted by a 
Hermitian inner product $ \langle \Phi(x),\Phi(y)\rangle $.

\subsection{\kr-LSH for Mutual Information Loss}

Now we are ready to present an asymmetric LSH 
scheme~\cite{shrivastava2014asymmetric} for mutual information 
loss. 
This method can be easily extended to other \kr kernels, provided that the 
associated positive definite kernels admit an explicit feature map. In fact, we reduce the problem of designing the LSH for a \kr kernel to the problem of designing the LSH for maximum inner product search (MIPS)~\citep{shrivastava2014asymmetric,neyshabur2015symmetric,yan2018norm}.
We call this general reduction \emph{\kr-LSH}.

\subsubsection{Reduction to Maximum Inner Product Search}

Our reduction is based on the following observation. Suppose that $ K $ is a 
\kr kernel on $ \cX $ such that $ K=K_1-K_2 $ where $ K_1 $ and $ K_2 $ are 
positive definite kernels on $ \cX $. Assume that $ K_1 $ and $ K_2$ admit 
feature maps $ \Phi_1 $ and $\Phi_2 $ such that $ K_1(x,y)=\langle 
\Psi_1(x),\Psi_1(y)\rangle $ and $ K_2(x,y)=\langle \Psi_2(x),\Psi_2(y)\rangle 
$. Then the \kr kernel $ K $ can also represented as an inner product 
\begin{equation}\label{eq:krein_inner_product}
K(x,y)=\langle \Phi_1(x)\oplus \Phi_2(x), \Phi_1(y)\oplus -\Phi_2(y)\rangle\enspace,
 \end{equation}
 where $ \oplus $ denotes the direct sum. If we define a pair of transforms $ 
 T_1(x) \triangleq \Phi_1(x)\oplus \Phi_2(x)
 $ and $ T_2(x)\triangleq \Phi_1(x)\oplus -\Phi_2(x)
  $,
  then we have $
  K(x,y)=\langle T_1(x),T_2(y)\rangle
   $.
   We call this pair of transforms \emph{left and right \kr transforms}.
   
\begin{algorithm}[tbh]
	\caption{ \kr-LSH}\label{alg:krein-lsh}
	\begin{algorithmic}[1]
		\Require Discretization parameters $ J\in \mathbb{N} $ and $ \Delta>0 $.
		\Ensure The left and right \kr transform $ \eta_1 $ and $ \eta_2 $.
		\State $ w_j \gets (j-1/2)\Delta $ for $ j=1,\dots,J $
		\State Construct the atomic transform \[
		\tau(x,w,j)\triangleq 
		\left[\cos(w\ln(x))\sqrt{2x\int_{(j-1)\Delta}^{j\Delta} 
			\rho(w')dw'},\sin(w\ln(x))\sqrt{ 2x\int_{(j-1)\Delta}^{j\Delta} 
			\rho(w')dw' }\right]
		\enspace.
		\]
		\State Construct the left and right basic transform \begin{align*}
		\eta_1(\bfx)\triangleq{} & \bigoplus_{j=1}^J \tau(p(x),w_j,j)\oplus 
		\bigoplus_{j=1}^{J} \bigoplus_{c\in \cC} \tau(p(c,x),w_j,j)\,,\\
		\eta_2(\bfx)\triangleq{} & \bigoplus_{j=1}^J \tau(p(x),w_j,j)\oplus 
		\bigoplus_{j=1}^{J} \bigoplus_{c\in \cC} -\tau(p(c,x),w_j,j)\,.
		\end{align*}
		\State Construct the left and right \kr transform
		\begin{equation*}
		T_1(\bfx,M) \triangleq{} 
		[\eta_1,\sqrt{M-\|\eta_1(\bfx)\|^2_2},0],\quad T_2(\bfy,M) \triangleq{} 
		[\eta_2,0,\sqrt{M-\|\eta_2(\bfx)\|^2_2}]\,.
		\end{equation*}
		where $ M $ is a constant such that $ M\ge \|\eta_1(\bfx)\|_2^2 $ (note 
		that $ 
		\|\eta_1(\bfx)\|_2 = \|\eta_2(\bfx)\|_2 $).
		\State Sample $\bfa\sim \cN(0,I)$ and construct the hash 
		function 
		$ h(\bfx; M)\triangleq \sign(\bfa^\top T(\bfx, M))$,
		where $T$ is either the left or right transform.\label{ln:simple-lsh}
	\end{algorithmic}
\end{algorithm}
   
   We exemplify this technique by applying it to the MIL divergence.
For ease of exposition, we define $ \rho(w)\triangleq \frac{2\sech(\pi 
w)}{1+4w^2} $. The proposed approach \kr-LSH is presented in 
\cref{alg:krein-lsh}. 
To make the intuition of \eqref{eq:krein_inner_product} applicable in a practical implementation, we have
 to truncate and discretize the 
integral $ k(x,y) = \int_R \Phi_w(x)^* \Phi_w(y) 
dw  $. First we analyze the truncation. The analysis is similar to Lemma~10 of \citep{abdullah2016sketching}.
\begin{lemma}[Truncation error bound, proof in \cref{app:truncation}]\label{lem:truncation}
	If $ t>0 $ and $ x,y\in [0,1] $, the truncation error can be bounded as 
	follows $
	\left| k(x,y) - \int_{-t}^{t} \Phi_w(x)^* \Phi_w(y)  dw \right|  \le 
	4e^{-t}$.
\end{lemma}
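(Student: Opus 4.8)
The plan is to compute the truncation error exactly as a tail integral, using the closed form of $\Phi_w(x)^*\Phi_w(y)$ supplied by \cref{lem:pd_k}, and then to reduce it to a one-dimensional exponential estimate. First I would record that, since $\rho(w)>0$ is real, the conjugate of $\Phi_w(x)=e^{-iw\ln x}\sqrt{x\rho(w)}$ is $\Phi_w(x)^*=e^{iw\ln x}\sqrt{x\rho(w)}$, so that
\begin{equation*}
\Phi_w(x)^*\Phi_w(y)=e^{iw\ln(x/y)}\sqrt{xy}\,\rho(w).
\end{equation*}
(The boundary cases $x=0$ or $y=0$ are trivial: there $k(x,y)=0$ under the convention $0\ln 0=0$, and every $\Phi_w(x)^*\Phi_w(y)$ vanishes, so I may assume $x,y\in(0,1]$.) By \cref{lem:pd_k} the integral of this quantity over all of $\bR$ equals $k(x,y)$, so the truncation error is precisely the tail
\begin{equation*}
k(x,y)-\int_{-t}^{t}\Phi_w(x)^*\Phi_w(y)\,dw=\int_{|w|>t}e^{iw\ln(x/y)}\sqrt{xy}\,\rho(w)\,dw.
\end{equation*}

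The second step is to discard the oscillatory phase, which is the one observation worth stating carefully. Since $\ln(x/y)$ is real we have $|e^{iw\ln(x/y)}|=1$, and since $x,y\in[0,1]$ we have $\sqrt{xy}\le 1$; applying the triangle inequality for integrals and using that $\rho$ is even then gives
\begin{equation*}
\left|\int_{|w|>t}e^{iw\ln(x/y)}\sqrt{xy}\,\rho(w)\,dw\right|\le\int_{|w|>t}\rho(w)\,dw=2\int_{t}^{\infty}\rho(w)\,dw.
\end{equation*}
This collapses the modulus of the integrand to $\sqrt{xy}\,\rho(w)\le\rho(w)$ and reduces the whole estimate to a single scalar tail integral that no longer depends on $x$ or $y$.

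The final step is the tail bound on $\rho$. Using $\sech(\pi w)=2/(e^{\pi w}+e^{-\pi w})\le 2e^{-\pi w}$ for $w>0$ together with $1/(1+4w^2)\le 1$ yields the pointwise estimate $\rho(w)\le 4e^{-\pi w}$, whence
\begin{equation*}
2\int_{t}^{\infty}\rho(w)\,dw\le 8\int_{t}^{\infty}e^{-\pi w}\,dw=\frac{8}{\pi}e^{-\pi t}\le 4e^{-t},
\end{equation*}
where the last inequality holds because $8/\pi<4$ and $e^{-\pi t}\le e^{-t}$ for $t>0$. This is the claimed bound. I do not expect a genuine obstacle: the only mild point is that the target rate $e^{-t}$ is weaker than the sharper $e^{-\pi t}$ one actually obtains, so I would simply absorb the slack into the generous constant rather than optimize it. The argument is otherwise a routine tail estimate once the integrand's modulus is seen to reduce to $\rho(w)$.
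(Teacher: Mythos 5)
Your proof is correct and follows essentially the same route as the paper's: express the truncation error as the tail integral over $|w|>t$, bound the integrand's modulus by $\rho(w)$ using $|e^{iw\ln(x/y)}\sqrt{xy}|\le 1$, and finish with $\rho(w)\le 4e^{-\pi w}$ to get $\frac{8}{\pi}e^{-\pi t}\le 4e^{-t}$. The only addition is your (harmless, slightly more careful) treatment of the boundary cases $x=0$ or $y=0$.
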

To discretize the finite integral $ \int_{-t}^{t} \Phi_w(x)^* \Phi_w(y)  dw $, we divide the inteval into $ 2J $ sub-intervals of length $ \Delta $. The following lemma bounds the discretization error.
\begin{lemma}[Discretization error bound, proof in \cref{app:discretization}]\label{lem:discretization}
	If $ J $ is a positive integer, $ \Delta>0 $, and $ w_j=(j-1/2)\Delta $, 
	the discretization error is bounded as follows \[
	\left| \int_{-\Delta J}^{\Delta J} \Phi_w(x)^* \Phi_w(y)  dw - 
	\left\langle \bigoplus_{j=1}^J \tau(x,w_j,j), \bigoplus_{j=1}^J \tau(y,w_j,j)\right\rangle\right|
	 \le 2\Delta
	 \,,\]
	 where $ \tau(x,w,j) = 
	 \left[\cos(w\ln(x))\sqrt{2x\int_{(j-1)\Delta}^{j\Delta} 
	 	\rho(w')dw'},\sin(w\ln(x))\sqrt{ 2x\int_{(j-1)\Delta}^{j\Delta} 
	 	\rho(w')dw' }\right]\in \bR^2 $.
\end{lemma}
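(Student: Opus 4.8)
The plan is to reduce the stated inequality to a one-dimensional midpoint-style quadrature estimate. First I would put both the integral and the discrete inner product into a common closed form. From $\Phi_w(x)=e^{-iw\ln x}\sqrt{x\rho(w)}$ and $\rho(w)>0$, a direct computation gives $\Phi_w(x)^*\Phi_w(y)=\sqrt{xy}\,\rho(w)\,e^{iw\ln(x/y)}$. Writing $\theta\triangleq\ln(x/y)$ and using that $\rho$ is even while $w\mapsto\sin(w\theta)$ is odd, the imaginary part integrates to zero over the symmetric window $[-\Delta J,\Delta J]$, so that $\int_{-\Delta J}^{\Delta J}\Phi_w(x)^*\Phi_w(y)\,dw=2\sqrt{xy}\int_0^{\Delta J}\rho(w)\cos(w\theta)\,dw$. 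On the discrete side, expanding $\tau(x,w_j,j)\cdot\tau(y,w_j,j)$ and applying $\cos(w_j\ln x)\cos(w_j\ln y)+\sin(w_j\ln x)\sin(w_j\ln y)=\cos(w_j\theta)$ yields $\tau(x,w_j,j)\cdot\tau(y,w_j,j)=2\sqrt{xy}\,\cos(w_j\theta)\int_{(j-1)\Delta}^{j\Delta}\rho(w)\,dw$. Thus both expressions carry the same prefactor $2\sqrt{xy}$, and their difference collapses to
\[
E\triangleq 2\sqrt{xy}\sum_{j=1}^{J}\int_{(j-1)\Delta}^{j\Delta}\rho(w)\bigl[\cos(w\theta)-\cos(w_j\theta)\bigr]\,dw.
\]

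Next I would bound $|E|$ interval by interval. Since cosine is $1$-Lipschitz and $w_j=(j-1/2)\Delta$ is the midpoint of $[(j-1)\Delta,j\Delta]$, we have $|\cos(w\theta)-\cos(w_j\theta)|\le|\theta|\,|w-w_j|\le|\theta|\,\Delta/2$ on each subinterval. Because $\rho\ge0$, summing and extending the domain of integration give
\[
|E|\le 2\sqrt{xy}\cdot\frac{|\theta|\Delta}{2}\int_0^{\infty}\rho(w)\,dw=\sqrt{xy}\,\bigl|\ln(x/y)\bigr|\cdot\Delta\int_0^{\infty}\rho(w)\,dw.
\]
Two elementary scalar bounds then finish the argument: using $\sech\le1$ one gets $\int_0^\infty\rho(w)\,dw\le\int_0^\infty\frac{2}{1+4w^2}\,dw=\pi/2$, and an optimization over $x,y\in(0,1]$ shows $\sqrt{xy}\,|\ln(x/y)|\le 2/e<1$. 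Combining the two yields $|E|\le\frac{\pi}{2}\Delta<2\Delta$, as required (the degenerate cases $x=0$ or $y=0$ make both sides of the claim vanish).

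The step I expect to be the main obstacle is controlling $\sqrt{xy}\,|\ln(x/y)|$. The crude Lipschitz estimate of the oscillatory factor introduces the constant $|\ln(x/y)|$, which is \emph{unbounded} on $(0,1]^2$ as the ratio $x/y$ tends to $0$ or $\infty$; the estimate only closes because this blow-up is exactly damped by the amplitude prefactor $\sqrt{xy}$. Making this rigorous---for instance by substituting $x=e^{-s}$, $y=e^{-u}$ with $s,u\ge0$, rewriting $\sqrt{xy}\,|\ln(x/y)|=|u-s|\,e^{-(s+u)/2}$, and maximizing subject to $s,u\ge0$ to obtain the clean constant $2/e$---is the only genuinely nonroutine part; the quadrature bookkeeping and trigonometric identities in the earlier steps are mechanical.
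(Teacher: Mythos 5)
Your proof is correct and takes essentially the same route as the paper's: a midpoint-quadrature error bound where the Lipschitz estimate of the oscillatory factor yields $|\ln(x/y)|\Delta/2$ per subinterval, the blow-up of $|\ln(x/y)|$ is damped by $\sqrt{xy}$ (you get the tighter constant $2/e$ by direct optimization, the paper gets $4/e$ via the trianglele inequality $\sqrt{xy}|\ln(x/y)|\le\sqrt{x}|\ln x|+\sqrt{y}|\ln y|$), and the total mass of $\rho$ is bounded (you use $\pi/2$, the paper uses the exact value $2\ln 2$). The only organizational difference is that you symmetrize to a cosine integral on $[0,\Delta J]$ before estimating, whereas the paper bounds the complex-exponential midpoint sum on $[-\Delta J,\Delta J]$ and only afterwards folds it into the real inner product; this is the same argument in a different order.
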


By \cref{lem:truncation,lem:discretization}, to guarantee that the total approximation error (including both truncation and discretization errors) is at most $ \epsilon $, it suffices to set $ \Delta = \frac{\epsilon}{4(1+|\cC|)} $ and $ J\ge \frac{4(1+|\cC|)}{\epsilon}\ln\frac{8(1+|\cC|)}{\epsilon} $.

\subsubsection{LSH for Maximum Inner Product Search}

The second stage of our proposed method is to apply LSH to the MIPS problem. 
As an example,  
in \cref{ln:simple-lsh}, we use the \splsh introduced by 
\citep{neyshabur2015symmetric}. Let us have a quick review of \splsh. Assume 
that $ \cM\subseteq \bR^d $ is a finite set of vectors and that for all $ 
\bfx\in \cM $, there is a universal bound on the squared 2-norm, \ie, $ \|\bfx 
\|^2_2\le M $.  \citet{neyshabur2015symmetric} assume that $ M=1 $ without loss 
of generality. We allow $ M $ to be any positive real number. 
For two vectors $ \bfx,\bfy\in \cM $, \splsh performs the following transform $
L_1(\bfx) \triangleq{} 
[\bfx,\sqrt{M-\|\bfx\|^2_2},0], L_2(\bfy) \triangleq{} 
[\bfy,0,\sqrt{M-\|\bfy\|^2_2}]$.
 Note that the norm of $ L_1 $ and $ L_2 $ is $ M $ and that therefore their cosine similarity equals their inner product. 
In fact, \splsh is a reduction from MIPS to LSH for the cosine similarity. Then a random-projection-based LSH for the cosine similarity~\citep{charikar2002similarity,wang2014hashing} \[ 
h(\bfx)\triangleq \sign(\bfx^\top L_i(\bfx)),\quad \bfa\sim \cN(0, I), i=1,2
 \]
 can be used for MIPS and thereby LSH for the MIL divergence via our reduction.

\paragraph{Discussion}
We have some important remarks for practical implementation of \kr-LSH. 
Although \citep{neyshabur2015symmetric} provides a theoretical guarantee for 
LSH for MIPS, as noted in~\citep{yan2018norm}, the additional term $ 
\sqrt{M-\|\bfx\|_2^2} $ may dominate in the $ 2 $-norm and significantly 
degrade the performance of LSH. To circumvent this issue, we recommend a method 
that partitions the dataset according to the $ 2 $-norm, \eg, the norm-ranging 
method~\citep{yan2018norm}.

\section{Experiment Results}
\begin{figure*}[bht]
	\centering
	\begin{subfigure}[t]{0.3\linewidth}
		\includegraphics[width=\linewidth]{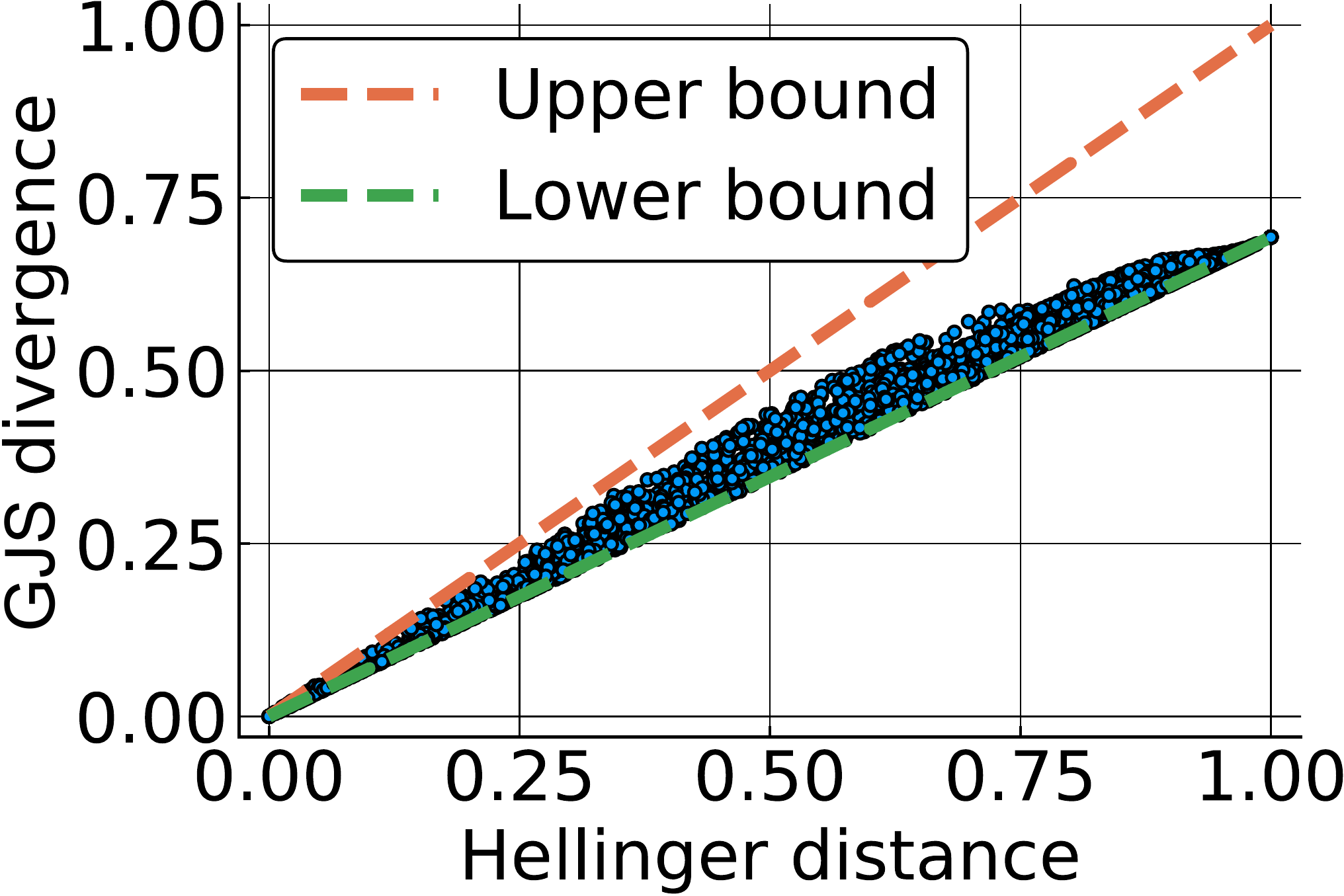}
		\caption{$ \lambda=1/2 $\label{fig:approximation2}}
	\end{subfigure}
\begin{subfigure}[t]{0.3\linewidth}
	\includegraphics[width=\linewidth]{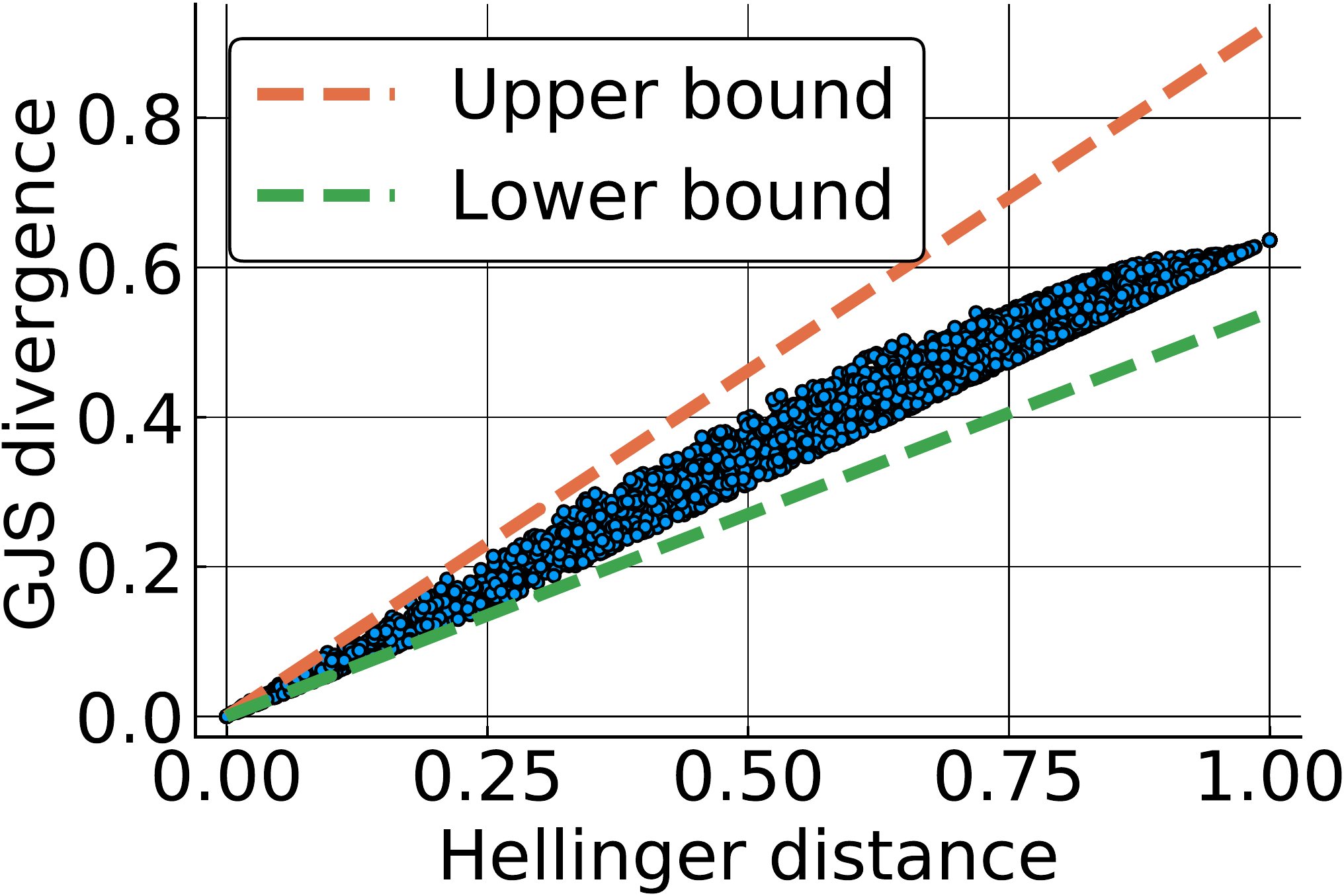}
	\caption{$ \lambda=1/3 $\label{fig:approximation3}}
\end{subfigure}
\begin{subfigure}[t]{0.3\linewidth}
	\includegraphics[width=\linewidth]{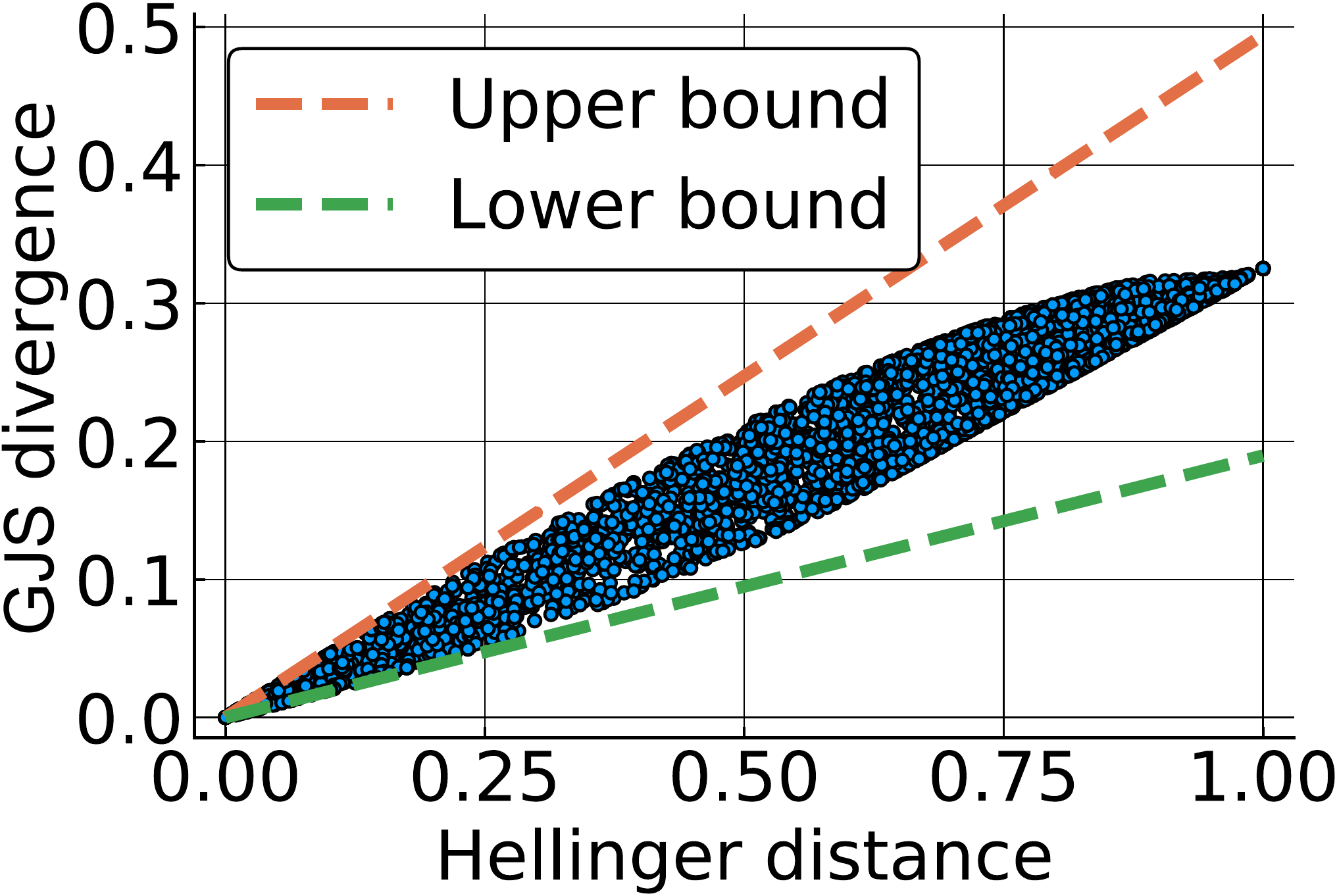}
	\caption{$ \lambda=1/10 $\label{fig:approximation10}}
\end{subfigure}
\caption{The empirical performance of Hellinger 
approximation\label{fig:approximation}}
\end{figure*}
\begin{figure*}[hbt]
	\centering
	\begin{subfigure}[t]{0.3\linewidth}
		\includegraphics[width=\linewidth]{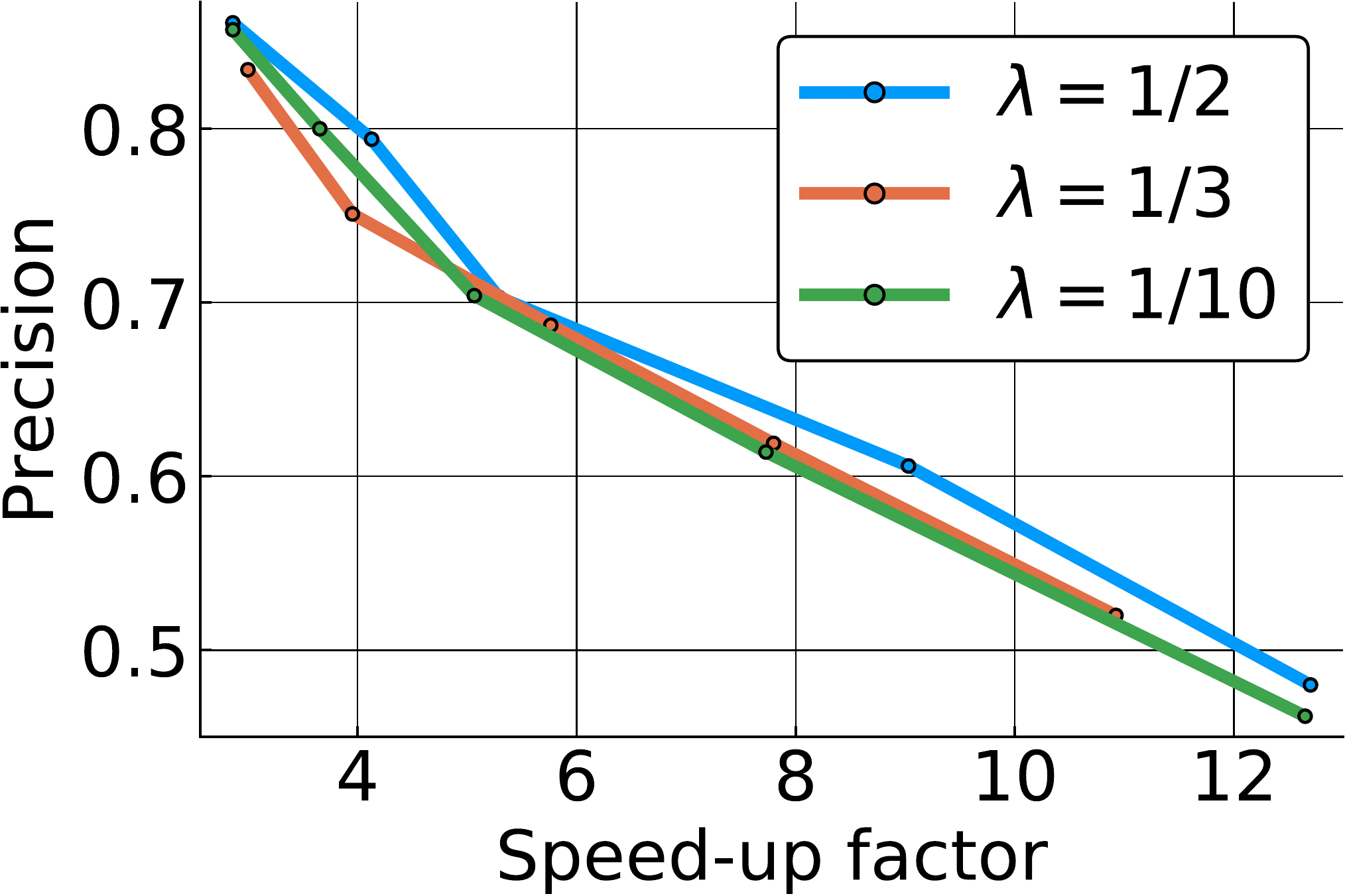}
		\caption{Fashion MNIST\label{fig:lambda_fashion}}
	\end{subfigure}
\begin{subfigure}[t]{0.3\linewidth}
	\includegraphics[width=\linewidth]{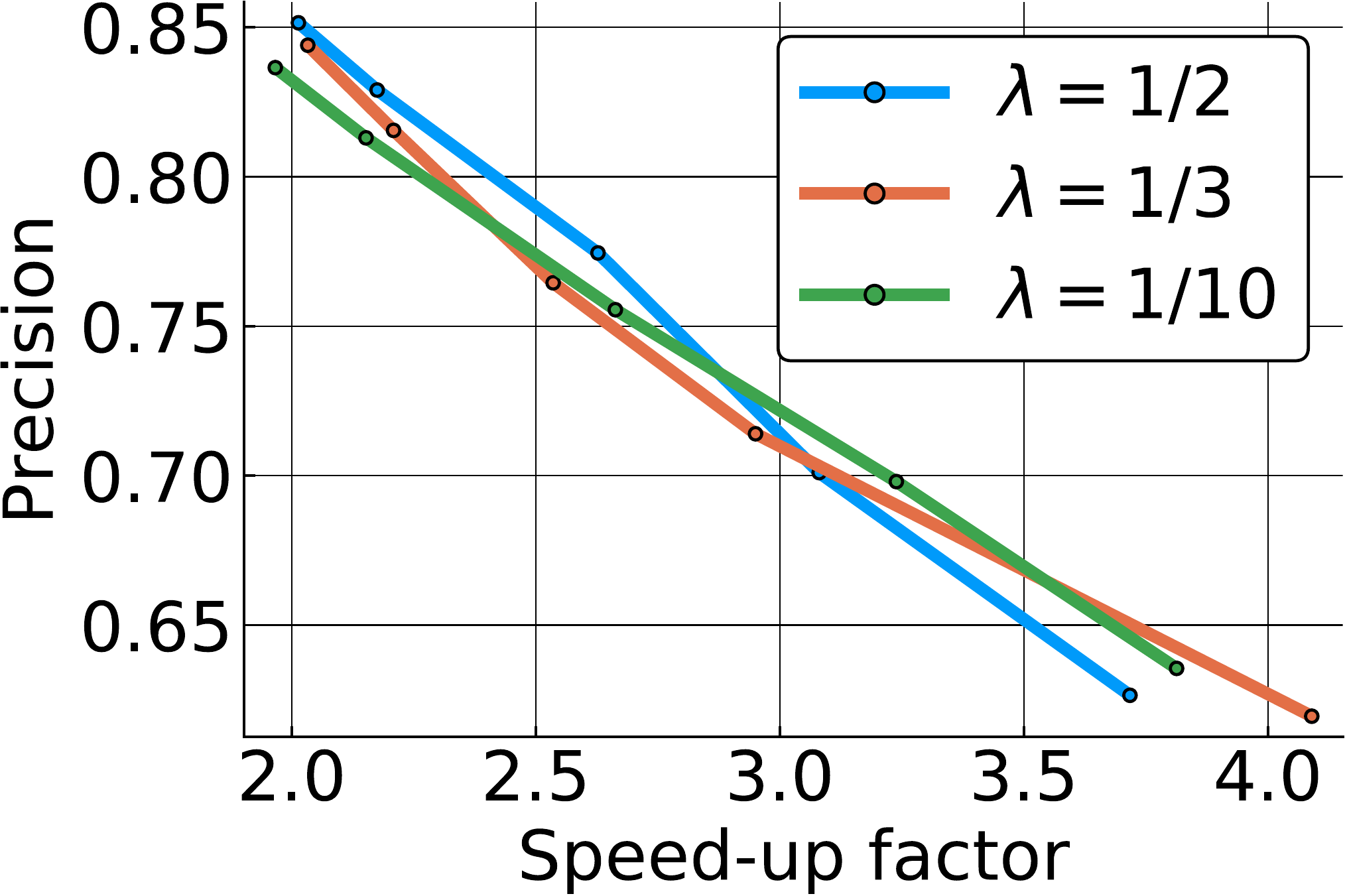}
	\caption{MNIST\label{fig:lambda}}
\end{subfigure}
\begin{subfigure}[t]{0.3\linewidth}
	\includegraphics[width=\linewidth]{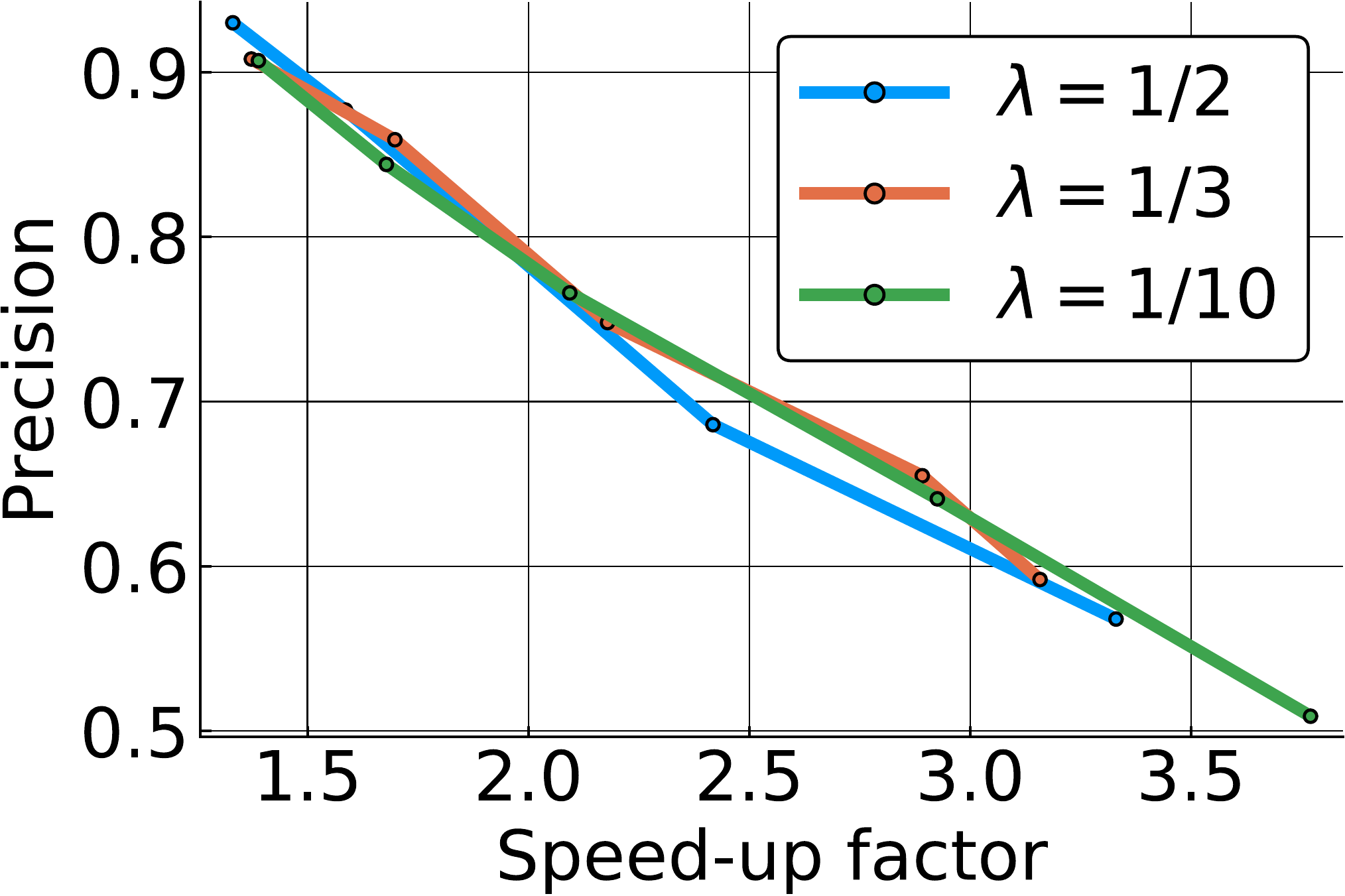}
	\caption{CIFAR-10\label{fig:lambda_cifar}}
\end{subfigure}
\caption{Precision vs.\ speed-up factor for different $ \lambda $'s.
}
\end{figure*}
\begin{figure*}[hbt]
	\centering
	\begin{subfigure}[t]{0.3\linewidth}
		\includegraphics[width=\linewidth]{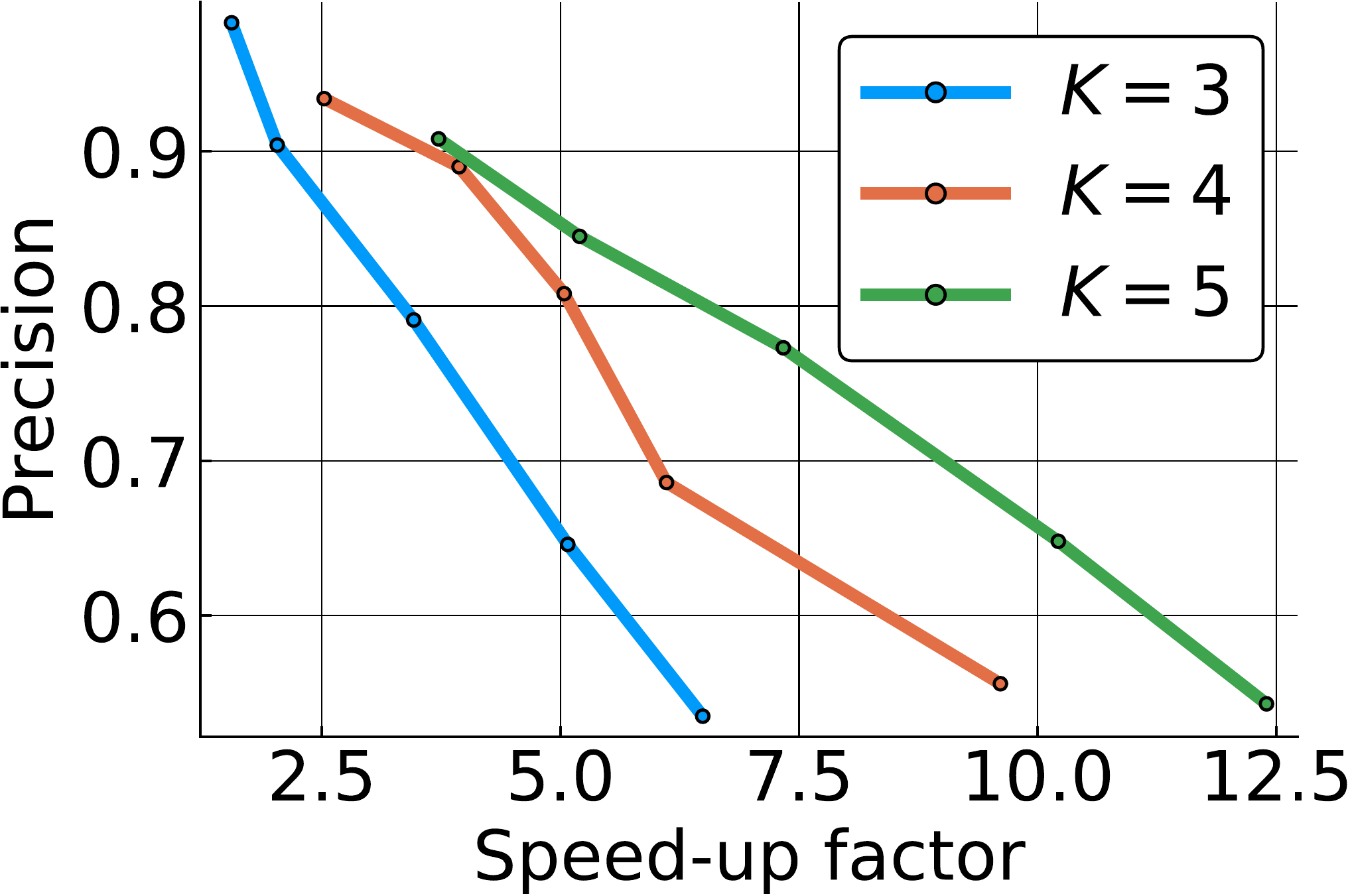}
		\caption{Fashion MNIST\label{fig:size_sketches_fashion}}
	\end{subfigure}
	\begin{subfigure}[t]{0.3\linewidth}
		\includegraphics[width=\linewidth]{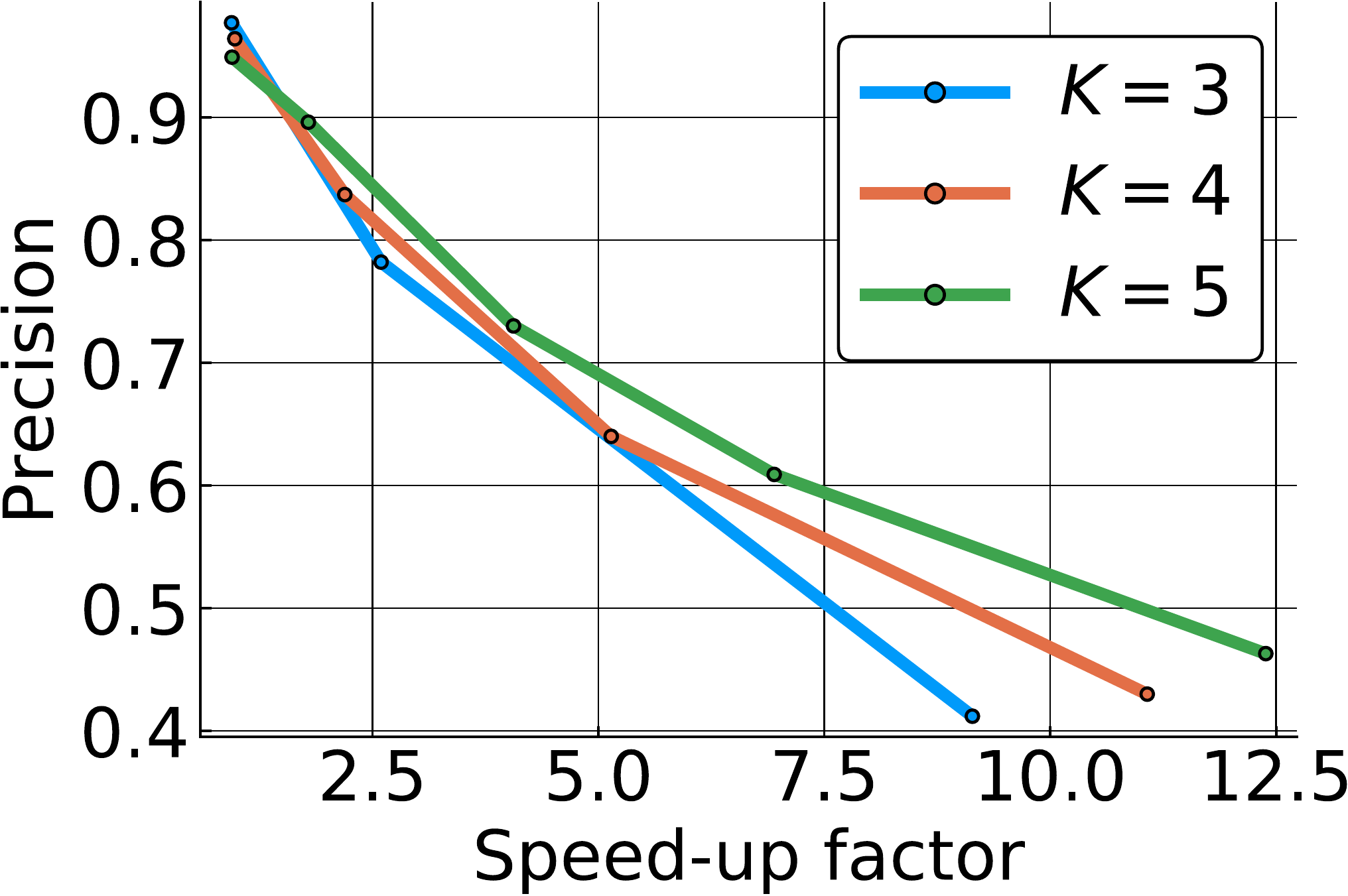}
		\caption{MNIST\label{fig:size_sketches}}
	\end{subfigure}
	\begin{subfigure}[t]{0.3\linewidth}
		\includegraphics[width=\linewidth]{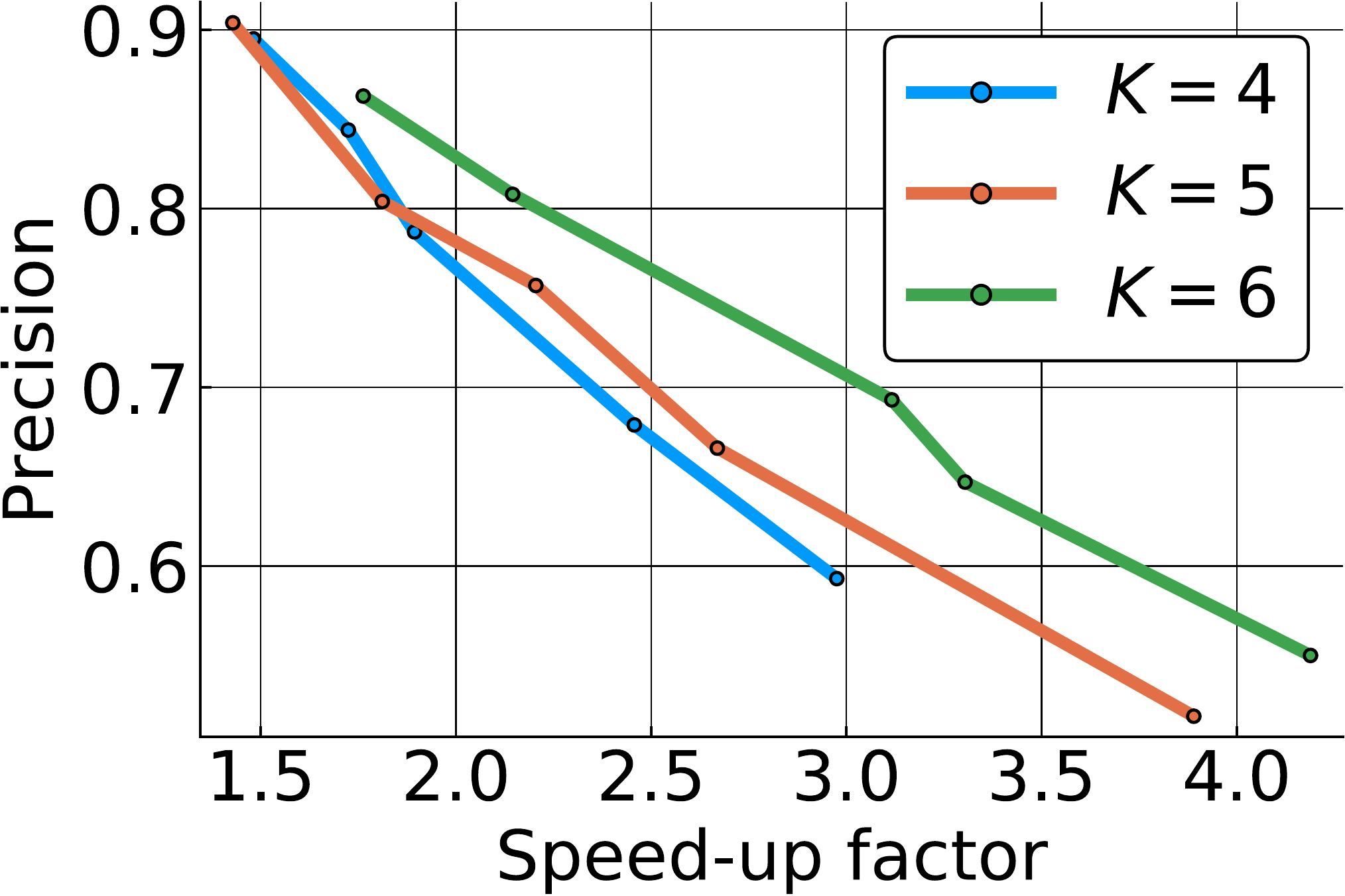}
		\caption{CIFAR-10\label{fig:size_sketches_cifar}}
	\end{subfigure}
	\caption{Precision vs.\ speed-up factor for different  sketch sizes.
		\label{fig:precision-vs-sketch-size}
	}
\end{figure*}

\textbf{Approximation Guarantee.}
In the first part, we verify the theoretical bounds derived in 
\cref{thm:approximation} on real data. We used the 
latent Dirichlet allocation to extract the 
topic distributions of 
 Reuters-21578, 
Distribution 1.0. The number of topics is set to $ 10 $. We sampled $ 100 $ 
documents uniformly at random and computed the GJS divergence and Hellinger 
distance between each pair of topic distributions. Each dot in 
\cref{fig:approximation} represents the topic distribution of a document. The 
horizontal axis denotes the Hellinger distance while the vertical axis denotes 
the GJS divergence. We chose different parameter values ($ \lambda=1/2,1/3,1/10 
$) for the GJS divergence. From the three subfigures, we observe that both 
the upper and lower bounds are tight for the data.

\textbf{Nearest Neighbor Search.}
In the second part, we apply the proposed LSH scheme for the GJS divergence to 
the nearest neighbor search problem in Fashion 
MNIST~\citep{xiao2017/online}, MNIST~\citep{lecun1998gradient}, and 
CIFAR-10~\citep{krizhevsky2009learning}. 
 Each image in the 
datasets is flattened into a vector and $ L^1 $-normalized, 
thereby summing to $ 1 $. As described in \cref{sub:prelim_lsh}, a  
concatenation of hash functions is used. 
 We 
denote  the number of concatenated hash 
functions by $ K $ and the number of compound hash functions by $ L $. 
In the 
first set of experiments, we set $ 
K=3 $ and vary $ L $ from $ 20 $ to $ 40 $. We measure the execution time of 
LSH-based $ k $-nearest neighbor search and the exact (brute-force) 
algorithm, where $ k $ is set to $ 20 $. Both 
algorithms were run on a 2.2 GHz Intel Core i7 processor. The speed-up factor 
is the ratio of the execution time of the exact algorithm to that of the 
LSH-based method. The quality of the result returned by the LSH-based method is 
quantified by its precision, which is the fraction of correct nearest neighbors 
among the retrieved items. We would like to remark that the precision and 
recall are equal in our case since both algorithms return $ k $ items. We also 
vary the parameter of the GJS divergence and 
choose $ \lambda $ from $ \{ 1/2,1/3,1/10 \} $. The result is illustrated in 
\cref{fig:lambda_fashion,fig:lambda,fig:lambda_cifar}. We observe a trade-off 
between the quality of the output 
(precision) and computational efficiency (speed-up factor). The performance 
appears to be robust to the parameter of the GJS divergence. 
In the second set of experiments, we fix the parameter of the GJS divergence to 
$ 1/2 $; \ie, the JS divergence is used. The number of concatenated hash 
functions $ K $ ranges from $ 3 $ to $ 5 $ or $ 4 $ to $ 6 $. The result is 
presented in
\cref{fig:size_sketches_fashion,fig:size_sketches,fig:size_sketches_cifar}.
 In 
addition to the aforementioned quality-efficiency 
trade-off, we observe that a larger $ K $ results in a more efficient algorithm 
given the same target precision.

\section{Conclusion}
In this paper, we propose a general strategy of designing an LSH family for 
$f$-divergences.
We exemplify this strategy by developing LSH schemes for the generalized 
Jensen-Shannon divergence and triangular discrimination in this framework. They 
are endowed with an LSH family via the Hellinger approximation. In particular, 
we show a two-sided approximation for the generalized Jensen-Shannon divergence 
by the Hellinger distance. This may be of independent interest.
Next, we propose a general approach to designing an LSH scheme for \kr kernels via a reduction to the problem of maximum inner product search. In contrast to our strategy for $ f $-divergences, this approach involves no approximation and is theoretically lossless. We exemplify this approach by applying to mutual information loss.

\subsubsection*{Acknowledgments}
LC was supported by the Google PhD Fellowship.

\clearpage
\printbibliography

\clearpage
\begin{appendices}
	\crefalias{section}{appsec}
	\crefalias{subsection}{appsec}
\section{Proof of \cref{lem:bound}}\label{app:bound}
	The first equation 
	$
	\kappa_\lambda(t) = \kappa_{1-\lambda}(1/t)
	$
	 can be verified directly by plugging in $ 
	1-\lambda $ and $ 1/t $. In the sequel, we show the second equation 
$
	\kappa_\lambda(t) \in [L(\lambda), U(\lambda) ]
	$,
	which needs a detailed and careful analysis and discussion.
	The derivative of $ \kappa_\lambda $, denoted by $ \kappa'_\lambda(t)  $, 
	is \begin{equation*}
\frac{2 \left(\lambda  \left(\sqrt{t}-1\right)+1\right) 
\ln (\lambda  (t-1)+1)-2 \lambda  \sqrt{t} \ln (t)}{\left(\sqrt{t}-1\right)^3 
\sqrt{t}}.
	\end{equation*}
	We define $ f_1(t) = 2 \left(\lambda  \left(\sqrt{t}-1\right)+1\right) 
	\ln (\lambda  (t-1)+1)-2 \lambda  \sqrt{t} \ln (t) $.
	Its derivative $ f_1'(t) $ is 
	\begin{equation*}
	 -\frac{\lambda}{\sqrt{t} (\lambda  (t-1)+1)}  \left(2 (\lambda -1) 
	\left(\sqrt{t}-1\right)
	+(\lambda (t-1)+1) (\log (t)- \log (\lambda (t-1)+1))
	\right).
	\end{equation*}
	Define $ f_2(t) = 2 (\lambda -1) 
	\left(\sqrt{t}-1\right)
	+(\lambda (t-1)+1) (\log (t)- \log (\lambda (t-1)+1)) $. Its derivative $ 
	f'_2(t) $ is
	\begin{equation*}
		\frac{(\lambda -1) \left(\sqrt{t}-1\right)}{t}+\lambda  (\log 
		(t)-  \log (\lambda  (t-1)+1)).
	\end{equation*}
	Its second derivative $ f''_2(t) $ is 
	\begin{equation*}
		\frac{(1-\lambda ) \left(2 (\lambda -1)+\sqrt{t} (\lambda  
		(t-1)+1)\right)}{2 t^2 (\lambda  (t-1)+1)}.
	\end{equation*}
	First, we assume $ \lambda\in (0,1/2) $. In this case, we have $ 
	\frac{1-\lambda}{\lambda}> 1 $ and $ \lambda  (t-1)+1 > 0 $. Notice that $ 
	f_3(t) = 2 (\lambda 
	-1)+\sqrt{t} (\lambda  
(t-1)+1) $ is a strictly increasing function in $ t $. Therefore, if $ t> 
\left(\frac{1-\lambda }{\lambda }\right)^2 $, we obtain 
\begin{equation*}
	f_3(t) > f_3\left(\left(\frac{1-\lambda }{\lambda }\right)^2\right) = 
	\frac{(\lambda -1) (\lambda +1) (2 \lambda -1)}{\lambda ^2} > 0.
\end{equation*}
Therefore $ f''_2(t) > 0 $ if $ t> \left(\frac{1-\lambda }{\lambda }\right)^2 
$. Thus we deduce that $ f'_2(t) $ is increasing in $ t $ if $ t> 
\left(\frac{1-\lambda }{\lambda }\right)^2 $, which yields 
\begin{equation*}
	f'_2(t) > f'_2\left(  \left(\frac{1-\lambda }{\lambda }\right)^2  \right) = 
	\frac{\lambda  \left(2 \lambda +(1-\lambda)  \log 
	\left(\frac{1-\lambda}{\lambda 
	}\right)-1\right)}{1-\lambda 
	}.
\end{equation*}
Define $ g(\lambda) = 2 \lambda +(1-\lambda)  \log 
\left(\frac{1-\lambda}{\lambda 
}\right)-1 $. Its derivative $ g'(\lambda) = -\frac{1}{\lambda }-\log 
\left(\frac{1}{\lambda }-1\right)+2$ is negative if $ \lambda < 1/2 $ and 
positive if $ \lambda > 1/2 $. Therefore $ g(\lambda) \ge g(1/2) = 0 $. Thus we 
obtain that if $ t > \left(\frac{1-\lambda }{\lambda }\right)^2 $, $ f'_2(t)>0 
$, which implies that $ f_2(t) $ is increasing in $ t $ if $ t > 
\left(\frac{1-\lambda }{\lambda }\right)^2 $. Thus we have 
\begin{equation*}
	 f_2(t)> f_2\left(  
	\left(\frac{1-\lambda }{\lambda }\right)^2 \right) = 
	\frac{(1-\lambda ) \left(4 \lambda +\log \left(\frac{1}{\lambda 
	}-1\right)-2\right)}{\lambda }.
\end{equation*}
Define $ g_1(\lambda) = 4 \lambda +\log \left(\frac{1}{\lambda 
}-1\right)-2 $. Its derivative $ g'_1(t) = \frac{1}{(\lambda -1) \lambda }+4 $ 
is non-positive, which implies that $ g_1 $ is decreasing in $ \lambda $. 
Therefore, if $ t> \left(\frac{1-\lambda }{\lambda }\right)^2 $, we have 
\begin{equation*}
	f_2(t) > \frac{1-\lambda}{\lambda}g(1/2)=0.
\end{equation*}
Since $ \lambda(t-1)+1>0 $, we obtain that $ f'_1(t) < 0 $ and therefore $ 
f_1(t) $ is decreasing if $ t> \left(\frac{1-\lambda }{\lambda }\right)^2 $. We 
have 
\begin{equation*}
	f_1(t)< f_1\left( \left(\frac{1-\lambda }{\lambda }\right)^2 \right)=0.
\end{equation*}
If $ t> \left(\frac{1-\lambda }{\lambda }\right)^2 $,
since $ \left(\sqrt{t}-1\right)^3 
\sqrt{t} > 0 $, we deduce that $ \kappa'_\lambda(t) <0 $.

If $ t < 1 $, since $ f_3(t) $ is strictly increasing in $ t $, we have $ 
f_3(t) < f_3(1) = 2\lambda -1 < 0 $, which implies that $ f''_2(t) <0 $. 
Therefore, we obtain that $ f'_2(t) $ is strictly decreasing on $ (0,1) $. Thus 
we have $ f'_2(t) > f'_2(1) = 0 $, which implies that $ f_2(t) $ is strictly 
increasing on $ (0,1) $. We immediately have $ f_2(t) < f_2(1) = 0 $ for $ 
\forall t\in (0,1) $, which yields that $ f'_1(t) > 0 $ and therefore $ f_1(t) 
$ is strictly increasing on $ (0,1) $. For $ \forall t\in (0,1) $, it holds 
that $ f_1(t) < f_1(1) = 0 $. Since $ \left(\sqrt{t}-1\right)^3 
\sqrt{t} < 0 $, we deduce that $ \kappa'_\lambda(t) > 0 $ for $ t\in (0,1) $.

The interval that remains unexplored is $ I = \left(1, \left( 
\frac{1-\lambda}{\lambda} 
\right)^2 \right) $. Since $ f_3(1) = 2 \lambda -1 <0 $ and $ f_3\left( \left( 
\frac{1-\lambda}{\lambda} 
\right)^2 \right) = \frac{(\lambda -1) (\lambda +1) (2 \lambda -1)}{\lambda 
^2}>0 
$, we know that $ f_3(t) $ has a real root on this interval.
Notice that $ f_3(t) $ can be viewed as a cubic function 
in $ \sqrt{t} $. 
Define $ f_4(x) = 2 \lambda +\lambda  x^3+(1-\lambda ) x-2 $ and we have $ 
f_3(t) = f_4(\sqrt{t}) $. The cubic function $ f_4 $ is strictly monotone if $ 
\lambda\in (0,1) $. Therefore, the real root of $ f_3 $ on $ I $ is unique and 
we denote it by $ \rho(\lambda) $.

Now we divide the interval $I =  \left(1, \left( 
\frac{1-\lambda}{\lambda} 
\right)^2 \right) $ into two subintervals $ I_1 = (1,\rho(\lambda)) $ and $ 
I_2 = \left(\rho(\lambda), \left( 
\frac{1-\lambda}{\lambda} 
\right)^2 \right) $. Since $ f_3(t) <0 $ on $ I_1 $ and $ f_3(t)>0 $ on $ I_2 
$, we have $ f''_2(t) <0 $ on $ I_1 $ and $ f''_2(t) >0 $ on $ I_2 $. 
Therefore, we deduce that $ f'_2(t) $ strictly decreases on $ I_1 $ and 
strictly increases on $ I_2 $. Note that $ f'_2(1) = 0 $ and 
\begin{equation*}
	f'_2\left( \left(\frac{1-\lambda}{\lambda}\right)^2 \right) = 
	\frac{\lambda  \left(2 \lambda +(1-\lambda)  \log 
	\left(\frac{1-\lambda}{\lambda 
	}\right)-1\right)}{1-\lambda 
	}> 0.
\end{equation*}
To see this, we define $ g_2(\lambda) = 2 \lambda +(1-\lambda)  \log 
\left(\frac{1-\lambda}{\lambda 
}\right)-1 $. Its second derivative is $ g''_2(\lambda) = \frac{1}{\lambda 
^2-\lambda ^3} >0 $, which implies that $ g_2(\lambda) $ is strictly convex and 
$ g'_2(\lambda) $ has a unique root. Observe that $ \lambda=1/2 $ is a root of 
$ g'_2(\lambda) $. We deduce that $ g_2(\lambda) > g_2(1/2) = 0 $ for $ 
\lambda\in (0,1/2) $, which immediately yields that $ f'_2\left( 
\left(\frac{1-\lambda}{\lambda}\right)^2 \right) > 0 $.
Thus the function $ f'_2(t) $ has a unique root (denoted by $ \rho_1(\lambda) 
$) on 
$ I 
$. Therefore, the function $ f_2(t) $ strictly decreases on $ I_3 = 
(1,\rho_1(\lambda)) $ and strictly increases on $ I_4 = \left(\rho_1(\lambda), 
\left( 
\frac{1-\lambda}{\lambda} 
\right)^2 \right) $. Note that $ f_2(1) = 0 $ and 
\begin{equation*}
	f_2\left( 
	\left(\frac{1-\lambda}{\lambda}\right)^2 \right)=\frac{(1-\lambda ) \left(4 
		\lambda +\log \left(\frac{1-\lambda }{\lambda 
		}\right)-2\right)}{\lambda } > 0.
\end{equation*}
To see the above inequality, we define $ g_3(\lambda) = 4 \lambda +\log 
\left(\frac{1-\lambda }{\lambda }\right)-2 $. Its derivative is $ g'_3(\lambda) 
= 
\frac{(1-2 \lambda )^2}{(\lambda -1) \lambda } <0 $, which implies that $ 
g_3(\lambda) $ strictly decreases and that $ g_3(\lambda) > g_3(1/2) = 0 $ for 
$ \lambda\in (0,1/2) $. As a result, we deduce that $ f_2\left( 
\left(\frac{1-\lambda}{\lambda}\right)^2 \right) >0 $.
Thus we obtain that the function $ f_2(t) $ has a unique root (denoted by $ 
\rho_2(\lambda) $) on $ I $ and that $ f'_1(t) $ is positive on $ I_5 = 
(1,\rho_2(\lambda)) $ and negative on $ I_6 = \left(\rho_2(\lambda), 
\left( 
\frac{1-\lambda}{\lambda} 
\right)^2 \right) $, which implies that $ f_1 $ strictly increases on $ I_5 $ 
and strictly decreases on $ I_6 $. Note that $ f_1(1) = f_1\left( 
\left(\frac{1-\lambda}{\lambda}\right)^2 \right) =  0 $. We conclude that $ 
f_1(t) > 0 $ on $ I $, which implies that $ \kappa'_\lambda(t) > 0  $ on $ I $.

From the above analysis, we see that if $ \lambda\in (0,1/2) $, the function $ 
\kappa'_\lambda(t) $ has no real root on $ (0,\infty) \setminus \{ 
1,\left(\frac{1-\lambda}{\lambda}\right)^2 \} $.
Since \begin{equation*}
	\lim_{t\to 1}\kappa_\lambda(t ) = 4 (1-\lambda ) \lambda>0, \quad 
	\kappa_\lambda\left( 
	\left(\frac{1-\lambda}{\lambda}\right)^2 \right) = 0,
\end{equation*}
	we deduce that the derivative $ \kappa'_\lambda(t)  $ has a unique root at 
	$ 
	t=\left(\frac{1-\lambda }{\lambda }\right)^2 $ if $ \lambda\in (0,1/2) $. 
	By \eqref{eq:property_of_kappa}, we know that it also holds for $ \lambda\in 
	(1/2,1) $. Furthermore, we know that 
	the derivative is positive 
	if $ t<\left(\frac{1-\lambda }{\lambda }\right)^2 $ and is negative if $ 
	t>\left(\frac{1-\lambda }{\lambda }\right)^2 $. 
	Thus the maximum of $ 
	\kappa_\lambda $ is attained at $ t=\left(\frac{1-\lambda }{\lambda 
	}\right)^2 $ and it is exactly $ U(\lambda) $. 

Next, we assume $ \lambda=1/2 $. We have \begin{equation*}
\kappa_{1/2}(t) = \frac{t \log (t)+(t+1) (\log (2)-\log 
(t+1))}{\left(\sqrt{t}-1\right)^2}.
\end{equation*}
Its derivative is \begin{equation*}
	\kappa'_{1/2}(t) = \frac{\left(\sqrt{t}+1\right) \log 
	\left(\frac{t+1}{2}\right)-\sqrt{t} \log (t)}{\left(\sqrt{t}-1\right)^3 
	\sqrt{t}}
\end{equation*}
Define $ f_5(t) = \left(\sqrt{t}+1\right) \log 
\left(\frac{t+1}{2}\right)-\sqrt{t} \log (t) $. Its derivative is 
\begin{equation*}
	f'_5(t) = \frac{2 \left(\sqrt{t}-1\right)+(t+1) \log 
	\left(\frac{t+1}{2}\right)-(t+1) \log (t)}{2 \sqrt{t} (t+1)}.
\end{equation*}
Then we define $ f_6(t) =2 \left(\sqrt{t}-1\right)+(t+1) \log 
\left(\frac{t+1}{2}\right)-(t+1) \log (t)$, whose derivative is 
\begin{equation*}
	f'_6(t) = \frac{\sqrt{t}-1}{t}-\log (2 t)+\log (t+1)
\end{equation*}
and second derivative \begin{equation*}
	f''_6(t) = \frac{1}{t^3+t^2}-\frac{1}{2 t^{3/2}}.
\end{equation*}
If we set $ f''_6(t) >0 $, we get $ t^{1/2}+t^{3/2} <2 $, which is equivalent 
to $ t<1 $. Therefore $ f''_6(t) $ is positive on $ (0,1) $ and negative on $ 
(1,\infty) $, which implies that $ f'_6(t) < f'_6(1) = 0 $ for $ t\ne 1 $. We 
deduce that $ 
f_6(t) $ is strictly decreasing in $ t $ and thus has a unique root. Since $ 
t=1 $ is a root of $ f_6(t) $, it is the unique root, which implies that $ 
f_6(t) $ and $ f'_5(t) $ are both positive on $ (0,1) $ and negative on $ 
(1,\infty) $. As a result, we deduce that $ f_5(t) < f_5(1) = 0 $ for $ t\ne 1 
$. Thus we conclude that $ \kappa'_{1/2}(t) $ is positive on $ (0,1) $ and 
negative on $ (1,\infty) $. We can verify that $ t=1 $ is indeed a root of $ 
\kappa'_{1/2}(t) $.

So far we have shown for $ t\in (0,1) $ that 
the derivative $ \kappa'_{\lambda}(t) $ is positive 
if $ t<\left(\frac{1-\lambda }{\lambda }\right)^2 $ and is negative if $ 
t>\left(\frac{1-\lambda }{\lambda }\right)^2 $. 
Thus the maximum of $ 
\kappa_\lambda $ is attained at $ t=\left(\frac{1-\lambda }{\lambda 
}\right)^2 $ and it is exactly $ U(\lambda) $.

The infimum is 
\begin{equation*}
\begin{split}
&\min\{ \lim_{t\to 0^+} \kappa_\lambda(t), \lim_{t\to \infty} \kappa_\lambda(t) 
\\= & \min\{ -2(1-\lambda)\ln(1-\lambda), -2\lambda\ln \lambda \}.
\}
\end{split}
\end{equation*}
Therefore we conclude $ \kappa_\lambda\in [L(\lambda), U(\lambda)] $.

\section{Proof of \cref{thm:approximation}}\label{app:approximation}

In addition to \cref{lem:bound}, we need the following lemma.
\begin{lemma}[Theorem 6 of \citep{sason2016f}]\label{thm:sason}
	Let $ f $ and $ g $ be two convex functions that satisfy $ f(1)=0 $ and $ 
	g(1)=0 $, respectively. The function $ g(t)>0 $ for every $ t\in 
	(0,1)\cup(1,\infty) $. Let $ P $ and $ Q $ be two distributions on a common 
	finite sample space $ \Omega $. Define $ \beta_1 = 
	\inf_{i\in \Omega} \frac{Q(i)}{P(i)} $ and $ \beta_2 = \inf_{i\in \Omega} 
	\frac{P(i)}{Q(i)} $. We assume 
	that $ \beta_1,\beta_2\in [0,1) $. Then we have \begin{equation*}
	D_f(P\parallel Q)\le \kappa^* D_g(P\parallel Q),
	\end{equation*}
	where \begin{equation*}
	\kappa^* = \sup_{\beta\in (\beta_2, 1)\cup (1,\beta_1^{-1})} 
	\frac{f(\beta)}{g(\beta)}.
	\end{equation*}
\end{lemma}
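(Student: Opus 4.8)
The plan is to reduce this divergence comparison to a pointwise inequality between the convex generators $f$ and $g$, evaluated along the likelihood ratios, and then integrate (here, sum). Since $\Omega$ is finite and $P\ll Q$, I would first rewrite both divergences as finite sums
\[
D_f(P\parallel Q) = \sum_{i\in\Omega} Q(i)\, f\!\left(\tfrac{P(i)}{Q(i)}\right),\qquad
D_g(P\parallel Q) = \sum_{i\in\Omega} Q(i)\, g\!\left(\tfrac{P(i)}{Q(i)}\right),
\]
restricting the sum to the support of $Q$ and writing $\beta_i = P(i)/Q(i)$. The key structural observation is that every likelihood ratio lies in a fixed bounded interval: the definition $\beta_2 = \inf_i P(i)/Q(i)$ gives $\beta_i \ge \beta_2$, while $\beta_1 = \inf_i Q(i)/P(i)$ gives $Q(i)/P(i) \ge \beta_1$ and hence $\beta_i \le \beta_1^{-1}$. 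Thus $\beta_i \in [\beta_2, \beta_1^{-1}]$ for every $i$.

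Next I would establish the pointwise domination $f(\beta_i) \le \kappa^* g(\beta_i)$ for each $i$, splitting into two cases. If $\beta_i = 1$, then $f(1) = g(1) = 0$, so both sides vanish and the inequality holds with equality. If $\beta_i \ne 1$, then $\beta_i \in [\beta_2,1)\cup(1,\beta_1^{-1}]$ and $g(\beta_i) > 0$ by hypothesis, so dividing reduces the claim to $f(\beta_i)/g(\beta_i) \le \kappa^*$, which is precisely the defining property of $\kappa^*$ as a supremum of this ratio over the relevant range.

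With the pointwise inequality in hand, the conclusion is immediate: since each $Q(i) \ge 0$, multiplying $f(\beta_i) \le \kappa^* g(\beta_i)$ by $Q(i)$ preserves the inequality, and summing over $i\in\Omega$ yields $D_f(P\parallel Q) \le \kappa^* D_g(P\parallel Q)$, as desired. Note that no sign assumption on $\kappa^*$ is needed, since the entire argument rests on the pointwise domination together with the nonnegativity of the weights $Q(i)$.

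The one point requiring care — and the step I expect to be the main obstacle — is the boundary, where open and closed intervals do not match up. The quantity $\kappa^*$ is a supremum over the \emph{open} set $(\beta_2,1)\cup(1,\beta_1^{-1})$, whereas the extreme likelihood ratios $\beta_2$ and $\beta_1^{-1}$ may in fact be attained at some index (an infimum over a finite set is a minimum). To close this gap I would invoke continuity: on a finite sample space a convex generator on $(0,\infty)$ is continuous and $g$ is strictly positive away from $1$, so $f/g$ is continuous at each extreme point, giving $f(\beta_2)/g(\beta_2) = \lim_{\beta\downarrow\beta_2} f(\beta)/g(\beta) \le \sup_{(\beta_2,1)} f/g \le \kappa^*$, and symmetrically at $\beta_1^{-1}$. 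This limiting argument guarantees the pointwise inequality even when the extreme ratios are achieved (and extends, via the usual conventions $f(0) = \lim_{t\downarrow 0} f(t)$, to the degenerate case $\beta_2 = 0$).
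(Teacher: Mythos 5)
This lemma is not proved in the paper at all: it is imported verbatim as Theorem~6 of \citep{sason2016f}, so there is no in-paper proof to compare your attempt against. On its own merits, your argument is correct, and it is essentially the standard proof of the cited result: write each divergence as the finite sum $\sum_{i} Q(i)\,f\bigl(P(i)/Q(i)\bigr)$ over the support of $Q$, note that every likelihood ratio lies in $[\beta_2,\beta_1^{-1}]$, establish the pointwise domination $f(\beta_i)\le \kappa^* g(\beta_i)$ (trivial at $\beta_i=1$ since $f(1)=g(1)=0$, and by definition of the supremum elsewhere), and then sum against the nonnegative weights $Q(i)$. The one genuinely delicate point --- that $\kappa^*$ is a supremum over the \emph{open} set $(\beta_2,1)\cup(1,\beta_1^{-1})$ while the extreme ratios $\beta_2$ and $\beta_1^{-1}$ are attained on a finite sample space --- is exactly the point you identified, and your fix (continuity of convex functions on $(0,\infty)$ together with $g>0$ away from $1$, plus the usual convention $f(0)=\lim_{t\downarrow 0}f(t)$ for the degenerate case $\beta_2=0$) is the right one. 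Your remark that no sign assumption on $\kappa^*$ is needed is also correct, since the conclusion follows purely from the pointwise bound and $Q(i)\ge 0$.
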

By \cref{thm:sason,lem:bound}, we have
\begin{equation*}
L(\lambda)H^2(P, Q)\le	\wjs{\lambda}{P}{Q} \le U(\lambda)H^2(P, Q).
\end{equation*}

Now we show that $ U(\lambda)\le 1 $. Its derivative $ U'(\lambda) $ has a 
unique root at $ \lambda=1/2 $ on the interval $ (0,1) $ and it is positive if 
$ \lambda <1/2 $ and negative if $ \lambda > 1/2 $. Therefore $ U(\lambda)\le 
U(1/2) =1 $.

\section{Proof of 
\cref{lem:wjs-as-f-divergence}}\label{app:wjs-as-f-divergence}
	The equation $ m_\lambda(1)=0 $ can be verified by plugging in $ t=1 $ 
	directly. We compute the second derivative of $ m_\lambda $
	\begin{equation*}
	\frac{d^2 m_\lambda}{d t^2} = 
	\frac{\lambda(1-\lambda)}{t^2\lambda+(1-\lambda)t}.
	\end{equation*}
	If $ \lambda\in [0,1] $ and $ t\in (0, \infty) $, we have $ \frac{d^2 
		m_\lambda}{d t^2}\geq 0 $, which implies the convexity of $ m_\lambda $.
	
	The $ m_\lambda $-divergence equals to 
	\begin{equation*}
		D_{m_\lambda}(P\parallel Q) = \int_{\Omega} \lambda \ln \frac{dP}{dQ}dP
		-(\lambda dP+(1-\lambda)dQ)\ln\left(\lambda 
		\frac{dP}{dQ}+1-\lambda\right)
	\end{equation*}
	while the MIL-divergence equals
	\begin{align*}
		\wjs{\lambda}{P}{Q}
		={} & \int_{\Omega} \lambda\ln\frac{dP/dQ}{\lambda 
			dP/dQ+(1-\lambda)}dP + (1-\lambda)\ln\frac{1}{\lambda 
			dP/dQ+(1-\lambda)}dQ\\
		={} & \int_{\Omega} \lambda \ln \frac{dP}{dQ}dP - (\lambda dP + 
		(1-\lambda)dQ)\ln 
		\left( 
		\lambda 
		\frac{dP}{dQ} + 1-\lambda \right)\,.
	\end{align*}
	Thus we conclude that the $ m_\lambda $-divergence yields the 
	MIL-divergence 
	with 
	parameter $ \lambda $.

\section{Proof of \cref{thm:main}}\label{app:main}
Let $ P $ and $ Q $ be two probability measures in $ \mathcal{P} $. If $ P $ 
and $ Q $ are equal, $ D_f(P\parallel Q) = 0 $. Therefore for any hash function 
$ h $, it holds that $ h(P) = h(Q) $, which implies that $ \Pr_{h\sim 
\mathcal{H}}[h(P)=h(Q)]=1\ge p_1 $.

In the sequel, we assume that $ P $ and $ Q $ are different.
Since $ P $ and $ Q $ are two different distributions, there exists $ i\in 
\Omega $ such that $ P(i) < Q(i) $. We show this by contradiction. Assume that 
$ \forall i\in \Omega $, $ P(i)\ge Q(i) $. Since $ P $ and $ Q $ are different, 
there exists $ i_0\in \Omega $ such that $ P(i_0) \ne Q(i_0) $. Since $ P(i)\ge 
Q(i) $ holds for $ \forall i\in \Omega $, we have $ P(i_0) > Q(i_0) $. 
Therefore $ \sum_{i\in \Omega} P(i) > \sum_{i\in \Omega} Q(i) $. However, both 
$ P $ and $ Q $ sum to $ 1 $, which leads to a contradiction. Therefore, we 
obtain the existence of $ i $ such that $ P(i) < Q(i) $, which yields $ 
\beta_2\triangleq \inf_{i\in \Omega}\frac{P(i)}{Q(i)} <1 $. Similarly, we have 
$ \beta_1\triangleq \inf_{i\in \Omega}\frac{Q(i)}{P(i)} <1 $. Since $ P(i) $ 
and $ Q(i) $ are 
non-negative for $ \forall i\in \Omega $, we have $ \beta_1,\beta_2\ge 0 $. In 
sum, we showed that $ \beta_1, \beta_2\in [0,1) $. By the definition of $ 
\beta_0 $, we know the following interval inclusion
\begin{equation*}
(\beta_2, \beta_1^{-1}) \subseteq (\beta_0, \beta_0^{-1}).
\end{equation*}

Recall that
\begin{equation*}
\begin{split}
U=&\sup_{\beta\in (\beta_0, 1)\cup (1,\beta_0^{-1})} \frac{f(\beta)}{g(\beta)}, 
\\ 
L =& \inf_{\beta\in (\beta_0, 1)\cup (1,\beta_0^{-1})} 
\frac{f(\beta)}{g(\beta)}.
\end{split}
\end{equation*}
By \cref{thm:sason}, we obtain the approximation guarantee 
\begin{equation}\label{eq:general_twosided_approximation}
L\cdot D_g(P\parallel
Q) \le  D_f(P\parallel Q) \le U\cdot D_g(P\parallel
 Q)
\end{equation}

There are two cases to consider. In the first case, we assume that $ 
D_f(P\parallel 
Q)\le L r_1 $. By \eqref{eq:general_twosided_approximation}, we have $ 
D_g(P\parallel Q) \le r_1 $. Since $ \mathcal{H} $ is an $ (r_1, r_2, p_1,p_2) 
$-sensitive family for $ g 
$-divergence, it holds that $ \pr_{h\sim 
	\mathcal{H}}[h(P)=h(Q)]\ge p_1 $. Similarly, if $ D_g(P\parallel Q) > Ur_2 
	$, we have $ \pr_{h\sim 
	\mathcal{H}}[h(P)=h(Q)]\le p_2 $. Thus, $ \mathcal{H} $ forms an $ (Lr_1, 
	Ur_2, p_1,p_2) $-sensitive family for $ f 
$-divergence on $ \mathcal{P} $.

\section{Proof of \cref{thm:lsh-family}}\label{sub:proof_wjs}
If $ \wjs{\lambda}{P}{Q}\le R $, by \cref{thm:approximation}, we have 
\begin{equation*}
	\left\| \sqrt{P}-\sqrt{Q} \right\|_2 \le 
	\sqrt{\frac{2R}{L(\lambda)}}\triangleq R_1.
\end{equation*}
If $ \wjs{\lambda}{P}{Q}\ge c^2\frac{U(\lambda)}{L(\lambda)}R $, we have 
\begin{equation*}
\left\| \sqrt{P}-\sqrt{Q} \right\|_2 \ge
c\sqrt{\frac{2R}{L(\lambda)}}= cR_1.
\end{equation*}
By the construction and properties of locality-sensitive hash family for $ L^2 
$ distance proposed in \cite[Section~3.2]{datar2004locality}, we know that $ 
h_{\mathbf{a},b} 
$ 
forms a $ (R_1,cR_1,p_1,p_2) $-sensitive hash family for the $ L^2 $ distance 
between two vectors $ \sqrt{P} $ and $ \sqrt{Q} $. Therefore, provided that $ 
\wjs{\lambda}{P}{Q}\le R $, which implies $ \left\| 
\sqrt{P}-\sqrt{Q} \right\|_2\le R_1 $, we have 
\begin{equation*}
	\pr[h_{\mathbf{a},b}(P)=h_{\mathbf{a},b}(Q)] \ge p_1.
\end{equation*}
Similarly, if $ \wjs{\lambda}{P}{Q}\ge c^2\frac{U(\lambda)}{L(\lambda)}R $, we 
have 
\begin{equation*}
	\pr[h_{\mathbf{a},b}(P)=h_{\mathbf{a},b}(Q)] \le p_2.
\end{equation*}

\section{Proof of \cref{thm:lsh_family_triangular_discrimination}}
\label{sub:proof_triangular_discrimination}
The derivative of the ratio function $ \kappa(t) = \frac{\delta(t)}{\hel(t)} $ 
is 
\begin{equation*}
	\kappa'(t) = \frac{1-t}{\sqrt{t} (t+1)^2}.
\end{equation*}
It is positive when $ t<1 $ and negative when $ t>1 $. Therefore for $ \forall 
t\in (0,\infty) $, $ \kappa(t)\le 
\kappa(1) = 2 $ and \begin{equation*}
\kappa(t)\ge \min\{ \lim_{t\to 0^+}\kappa(t),\lim_{t\to \infty}\kappa(t) \} = 1.
\end{equation*}
By \cref{thm:sason}, we have \begin{equation*}
	H^2(P,Q)\le \td{P}{Q}\le 2H^2(P,Q).
\end{equation*}

If $ \td{P}{Q}\le R $,  we have 
\begin{equation*}
\left\| \sqrt{P}-\sqrt{Q} \right\|_2 \le 
\sqrt{2R}\triangleq R_1.
\end{equation*}
If $ \wjs{\lambda}{P}{Q}\ge 2c^2R $, we have 
\begin{equation*}
\left\| \sqrt{P}-\sqrt{Q} \right\|_2 \ge
\sqrt{2R}c= cR_1.
\end{equation*}
By the construction and properties of locality-sensitive hash family for $ L^2 
$ distance proposed in \cite[Section~3.2]{datar2004locality}, we know that $ 
h_{\mathbf{a},b} 
$ 
forms a $ (R_1,cR_1,p_1,p_2) $-sensitive hash family for the $ L^2 $ distance 
between two vectors $ \sqrt{P} $ and $ \sqrt{Q} $. Therefore, provided that $ 
\td{P}{Q}\le R $, which implies $ \left\| 
\sqrt{P}-\sqrt{Q} \right\|_2\le R_1 $, we have 
\begin{equation*}
\pr[h_{\mathbf{a},b}(P)=h_{\mathbf{a},b}(Q)] \ge p_1.
\end{equation*}
Similarly, if $ \td{P}{Q}\ge 2c^2R $, we 
have 
\begin{equation*}
\pr[h_{\mathbf{a},b}(P)=h_{\mathbf{a},b}(Q)] \le p_2.
\end{equation*}

\section{Proof of \cref{lem:pd_k}}\label{app:pd_k}

	First, we would like to note that $ k $ is homogeneous, \ie,
for all $c\ge 0$, it holds that $k(cx,cy)=ck(x,y)$.
Its kernel signature~\citep{vedaldi2012efficient} is
\begin{equation*}
\cK(\lambda)\triangleq k(e^{\lambda/2},e^{-\lambda/2}) = e^{-\frac{\lambda 
	}{2}} \left(\left(e^{\lambda }+1\right) \ln \left(e^{\lambda 
}+1\right)-e^{\lambda } \lambda \right)\,.
\end{equation*}
First, let us review the definition of a positive definite function.
\begin{definition}[\citep{bochner1959lectures}]
	We call a complex-valued function $f:\bR\to \bC$ is positive definite if
	\begin{compactenum}
		\item it is continuous in the finite region and is bounded on $\bR$
		\item it is Hermitian, \ie, $\overline{f(-x)}=f(x)$
		\item it satisfies the following conditions: for any real numbers 
		$x_1,\dots,x_n\in \bR$, the matrix \[
		A=(f(x_i-x_j))_{i,j=1}^n
		\]
		is positive semidefinite.
	\end{compactenum}
\end{definition}
Next we will show that $ \cK $ is a positive definite function by showing 
that it is the Fourier transform of a non-negative function.
We have the following Fourier transform and inverse Fourier transform
\begin{align*}
\cK(\lambda) ={}& \int_\bR e^{-i\lambda w}\frac{2\sech(\pi 
	w)}{1+4w^2}dw\,,\\
\kappa(w)\triangleq{} & \frac{1}{2\pi}\int_\bR \cK(\lambda)e^{i\lambda 
	w}d\lambda=\frac{2\sech(\pi w)}{1+4w^2}\,.\\
\end{align*}
Then we need the following 
lemmata.
\begin{lemma}\label{lem:pd}
	If $f(x)=\int_\bR e^{-ixt}g(t)dt$ is the Fourier transform of a 
	non-negative function $g(t)$, then it is positive definite.
\end{lemma}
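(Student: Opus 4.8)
The plan is to verify directly the three defining properties of a positive definite function from the integral representation $f(x)=\int_\bR e^{-ixt}g(t)\,dt$, with the matrix condition being the crux. Throughout I assume, as is implicit in the setup, that $g\in L^1(\bR)$ so the integral converges; in the application $g=\kappa$ is an explicit integrable non-negative function.

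First, for boundedness and continuity: since $|e^{-ixt}|=1$, we have $|f(x)|\le \int_\bR g(t)\,dt<\infty$, so $f$ is bounded on $\bR$ by $\|g\|_1$, and continuity follows from the dominated convergence theorem using $g$ itself as the integrable dominating function. For the Hermitian condition, because $g$ is non-negative and hence real-valued, I would compute $\overline{f(-x)}=\overline{\int_\bR e^{ixt}g(t)\,dt}=\int_\bR e^{-ixt}g(t)\,dt=f(x)$.

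The heart of the argument is the positive-semidefiniteness of $A=(f(x_i-x_j))_{i,j=1}^n$ for arbitrary reals $x_1,\dots,x_n$. I would fix any $c=(c_1,\dots,c_n)\in\bC^n$ and evaluate the associated quadratic form: substituting the integral representation and exchanging the finite sum with the integral gives
\[
\sum_{i,j=1}^n c_i\overline{c_j}\,f(x_i-x_j)
=\int_\bR g(t)\,\Bigl(\sum_{i=1}^n c_i e^{-ix_i t}\Bigr)\overline{\Bigl(\sum_{j=1}^n c_j e^{-ix_j t}\Bigr)}\,dt
=\int_\bR g(t)\,\Bigl|\sum_{i=1}^n c_i e^{-ix_i t}\Bigr|^2 dt.
\]
Since $g(t)\ge 0$ and the modulus-squared factor is non-negative, the integrand is non-negative pointwise, so the integral, and hence the quadratic form, is non-negative. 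This shows $A$ is positive semidefinite, completing the verification.

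The only delicate point is justifying the interchange of the finite sum and the integral, but this is immediate: the sum is finite, so linearity of the integral suffices and no genuine Fubini hypothesis beyond integrability of $g$ is needed. The one genuinely substantive move is the algebraic recognition that the cross-term $\sum_{i,j}c_i\overline{c_j}e^{-i(x_i-x_j)t}$ factors as a squared modulus, which is precisely what converts non-negativity of $g$ into non-negativity of the quadratic form; everything else is routine.
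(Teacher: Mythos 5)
Your proof is correct and takes essentially the same route as the paper: the central step---expanding the quadratic form $\sum_{i,j} c_i\overline{c_j}\,f(x_i-x_j)$ and factoring it as $\int_{\mathbb{R}} g(t)\bigl|\sum_i c_i e^{-ix_i t}\bigr|^2\,dt \ge 0$---is exactly the paper's argument. You are in fact slightly more thorough, since you also verify the boundedness, continuity, and Hermitian conditions required by the paper's definition of a positive definite function, which the paper's own proof silently omits.
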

\begin{proof}[Proof of \cref{lem:pd}]
	Let $x_1,\dots,x_n\in \bR$ be arbitrary real numbers and 
	$a_1,\dots,a_n$ be 
	arbitrary complex numbers. Let us compute the quadratic form directly
	\begin{align*}
	\sum_{j,k=1}^n f(x_j-x_k)a_j\overline{a_k} = \int_\bR \sum_{j,k=1}^n 
	e^{-i(x_j-x_k)t}a_j\overline{a_k}g(t)dt = \int_\bR \left| \sum_{j=1}^n 
	a_je^{-ix_j t} \right|^2 g(t)dt \ge 0\,.
	\end{align*}
\end{proof}

\begin{lemma}[Lemma~1 in \citep{vedaldi2012efficient}]\label{lem:pdkernel}
	A homogeneous kernel is positive definite if, and only if,
	its signature $\cK(\lambda)$ is a positive definite function. 
\end{lemma}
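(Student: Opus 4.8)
The plan is to reduce positive definiteness of the kernel $k$ and positive definiteness of its signature $\cK$ to one and the same scalar inequality, by exploiting the degree-one homogeneity $k(cx,cy)=c\,k(x,y)$ to factor every evaluation of $k$ through $\cK$. Everything then comes down to a single change of variables between the two quadratic forms.

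The first and central step is to establish the decomposition identity
\[
k(x,y) = \sqrt{xy}\,\cK(\ln x - \ln y), \qquad x,y>0.
\]
To see this I would set $\lambda = \ln x - \ln y$ and observe that $x = \sqrt{xy}\,e^{\lambda/2}$ and $y = \sqrt{xy}\,e^{-\lambda/2}$; applying homogeneity with scale factor $c=\sqrt{xy}$ then gives $k(x,y)=\sqrt{xy}\,k(e^{\lambda/2},e^{-\lambda/2})$, and the bracketed term is exactly $\cK(\lambda)$ by definition of the signature. This identity is the entire engine of the proof, and the two implications are just the same computation read in opposite directions.

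For the ``if'' direction, assume $\cK$ is a positive definite function. Given points $x_1,\dots,x_n>0$ and real coefficients $a_1,\dots,a_n$, I would substitute $\mu_i = \ln x_i$ and $b_i = a_i\sqrt{x_i}$, so that the decomposition yields
\[
\sum_{i,j} a_i a_j\,k(x_i,x_j) = \sum_{i,j} b_i b_j\,\cK(\mu_i-\mu_j) \ge 0,
\]
where nonnegativity is precisely the matrix clause of the definition applied to the points $\mu_i$. For the ``only if'' direction, assume $k$ is positive definite; given $\lambda_1,\dots,\lambda_n\in\bR$ I would run the substitution backwards, setting $x_i = e^{\lambda_i}>0$ and $a_i = c_i/\sqrt{x_i}$, so the same identity turns $\sum_{i,j} c_i c_j\,\cK(\lambda_i-\lambda_j)$ into $\sum_{i,j} a_i a_j\,k(x_i,x_j)\ge 0$. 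Both substitutions are bijections between admissible data, so the two matrix conditions are equivalent.

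The remaining work, and the part I expect to be the only genuine subtlety, is verifying the extra clauses in the Bochner-style definition of a positive definite function beyond the matrix clause. The Hermitian (here, evenness) requirement follows at once from symmetry of $k$, since $\cK(-\lambda)=k(e^{-\lambda/2},e^{\lambda/2})=k(e^{\lambda/2},e^{-\lambda/2})=\cK(\lambda)$. Two technical points deserve care: first, in the ``only if'' direction the substitution delivers nonnegativity only for \emph{real} coefficients, so I would note that the matrix $(\cK(\lambda_i-\lambda_j))$ is real symmetric and that a real symmetric matrix is positive semidefinite over $\bC$ if and only if it is so over $\bR$, upgrading the real quadratic-form bound to the Hermitian condition. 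Second, the continuity and boundedness clauses are not algebraic and do not follow from the quadratic-form identity alone; they must be imported from a regularity hypothesis on $k$ (continuity on $(0,\infty)^2$, together with boundedness of $\cK$ on the relevant ranges). I would therefore carry a standing continuity assumption on the homogeneous kernel, under which these clauses are routine, and emphasize that the substantive content is the matrix-condition equivalence handled above.
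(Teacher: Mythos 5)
Your proof is correct, but there is no internal proof to compare it against: the paper does not prove this lemma at all. It is imported verbatim as Lemma~1 of \citep{vedaldi2012efficient} and used as a black box inside the proof of \cref{lem:pd_k}, so your argument is a self-contained substitute for that citation (and it is essentially the standard argument from that reference). Its engine, the identity $k(x,y)=\sqrt{xy}\,\cK(\ln x-\ln y)$, does already appear in the paper, namely in the chain $k(x,y)=\sqrt{xy}\,k(\sqrt{x/y},\sqrt{y/x})=\sqrt{xy}\,\cK(\ln(y/x))$ at the end of \cref{app:pd_k} --- but there it is used to verify the explicit feature map, not to prove this equivalence. Your two substitutions ($\mu_i=\ln x_i$, $b_i=a_i\sqrt{x_i}$, and the inverse map) are bijections on the admissible data, so the two matrix clauses are indeed equivalent, and your upgrade from real to complex coefficients is sound: the matrix $(\cK(\lambda_i-\lambda_j))_{ij}$ is real symmetric because $\cK$ is even (which you correctly derive from the symmetry of $k$), and a real symmetric positive semidefinite matrix satisfies $z^*Az\ge 0$ for complex $z$ as well. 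Your closing caveat is also placed correctly, with one small refinement available: of the extra clauses in the paper's Bochner-style definition, boundedness is actually automatic once the matrix clause holds, since the $2\times 2$ minors give $|\cK(\lambda)|\le \cK(0)$; the only hypothesis you genuinely need to import is continuity of $k$ on $(0,\infty)^2$, which is implicit in \citep{vedaldi2012efficient} (whose kernels are continuous throughout) and holds for the kernel $k(a,b)=a\ln\frac{a}{a+b}+b\ln\frac{b}{a+b}$ to which the lemma is applied in \cref{lem:pd_k}.
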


Since $\frac{2\sech(\pi w)}{1+4w^2}\ge 0$ holds for $\forall w\in \bR$, we 
deduce that $\cK(\lambda)$ is the Fourier transform of a non-negative function. 
\cref{lem:pd} implies that $\cK(\lambda)$ is a positive definite function. 
Therefore $k$ is a positive definite kernel by \cref{lem:pdkernel}.

Let us define the feature map \[
\Phi_w(x) \triangleq e^{-iw\ln(x)}\sqrt{x\frac{2\sech(\pi w)}{1+4w^2}}\,.
\]
Since $k(x,y)$ is homogeneous, we have
\begin{align*}
k(x,y)={} & \sqrt{xy}k(\sqrt{x/y},\sqrt{y/x}) = \sqrt{xy} \cK(\ln(y/x))\\
={} & \sqrt{xy}\int_{\bR} e^{-i\ln(y/x)w} \frac{2\sech(\pi w)}{1+4w^2}dw
= \int_\bR \Phi_w(x)^* \Phi_w(y) dw\,.
\end{align*}

\section{Proof of \cref{thm:krein_kernel}}\label{app:krein_kernel}
Let $ 
z $ denote the merged value. If we define $\eta(u)\triangleq -u\ln(u)$,
the mutual information loss is
\begin{align*}
\mil(\bfx,\bfy) ={} & \sum_{c\in \cC}\left[ p(c,x)\ln 
\frac{p(c,x)}{p(c)p(x)} + 
p(c,y)\ln \frac{p(c,y)}{p(c)p(y)} 
- p(c,z)\ln \frac{p(c,z)}{p(c)p(z)} \right] \\
={} & \sum_{c\in \cC}\left[ p(c,x)\ln 
\frac{p(c,x)}{p(x)} + 
p(c,y)\ln \frac{p(c,y)}{p(y)} 
- p(c,z)\ln \frac{p(c,z)}{p(z)} \right]\\
={} & \eta(p(x)) 
+ \eta(p(y)) - \eta(p(z)) - \sum_{c\in \cC} \left[ 
\eta(p(c,x))+\eta(p(c,y))-\eta(p(c,z)) \right] \,.
\end{align*}
By the definition of $ k $, we have \[ 
k(a,b) = \eta(a)+\eta(b)-\eta(a+b)\enspace.
\]
As a result, we re-write $ \mil(\bfx,\bfy) $ as
\[ 
\mil(\bfx,\bfy) 
= k(p(x),p(y))- \sum_{c\in \cC} k(p(c,x),p(c,y))
= K_1(\bfx,\bfy)-K_2(\bfx,\bfy)\enspace.
\]

\cref{lem:pd_k} indicates that $ k $ is a positive definite kernel. 
In light of the techniques for constructing new kernels presented in \citep[Section~6.2]{bishop2006pattern}, we obtain that 
that $ K_1 $ and $ K_2 $ are positive definite kernels.

\section{Proof of \cref{lem:truncation}}\label{app:truncation}
Recall that $ k(x,y) = \int_\bR \Phi_w(x)^*\Phi_w(y)dw $. We have
\begin{align*}
\left| k(x,y) - \int_{-t}^{t} \Phi_w(x)^* \Phi_w(y)  dw \right| ={}& 
\left| \int_{|w|>t} \Phi_w(x)^*\Phi_w(y)dw \right|
\le \int_{|w|>t} \left| e^{iw\ln(x/y)}\sqrt{xy}\rho(w) \right| dw\\
\stackrel{(a)}{\le}& 
2\int_{t}^\infty 
\rho(w)dw
\stackrel{(b)}{\le}  8\int_{t}^\infty e^{-\pi w}dw = \frac{8}{\pi} e^{-\pi 
	t} \le 
4e^{-t}\enspace,
\end{align*}
where $ (a) $ is due to $ \left| e^{iw\ln(x/y)}\sqrt{xy} \right| \le 1 $
and $ (b) $ is due to \[ 
\frac{2\sech(\pi w)}{1+4w^2}\le 2\sech(\pi w) = \frac{4}{e^{\pi w}+e^{-\pi w}} \le 4e^{-\pi w}\enspace.
\]
\section{Proof of \cref{lem:discretization}}\label{app:discretization}

As the first step, we re-write the integral\[ 
\int_{-\Delta J}^{\Delta J} \Phi_w(x)^* \Phi_w(y)  dw = \sum_{j=-J+1}^J \int_{(j-1)\Delta}^{j\Delta} e^{iw\ln(x/y)}\sqrt{xy}\rho(w)  dw\enspace.
\]
Then we bound the discretization error \begin{align*}
& \left| \int_{-\Delta J}^{\Delta J} \Phi_w(x)^* \Phi_w(y)  dw - 
\sum_{j=-J+1}^J \int_{(j-1)\Delta}^{j\Delta} e^{iw_j\ln(x/y)}\sqrt{xy}\rho(w)  dw
\right|\\
\le{} & \sum_{j=-J+1}^J \int_{(j-1)\Delta}^{j\Delta} \left| e^{iw\ln(x/y)} - e^{iw_j\ln(x/y)} \right| \sqrt{xy}\rho(w)  dw \\
\stackrel{(a)}{\le} & \sum_{j=-J+1}^J \int_{(j-1)\Delta}^{j\Delta} |\ln(x/y)|\frac{\Delta}{2} \sqrt{xy}\rho(w)  dw
= \frac{\Delta}{2}\sqrt{xy}|\ln(x/y)|\int_{-\Delta J}^{\Delta J} \rho(w)dw
\stackrel{(b)}{\le} 2\Delta %
\enspace,
\end{align*}
where $ (a) $ is due to \[ 
\left| e^{iw\ln(x/y)} - e^{iw_j\ln(x/y)} \right| \le |\ln(x/y)| |w-w_j| \le \frac{\Delta}{2}|\ln(x/y)|\enspace.
\] and $ (b) $ is due to $ \int_{-\Delta J}^{\Delta J} \rho(w)dw \le \int_\bR \rho(w)dw = 2\ln 2 $ and $ \sqrt{xy}|\ln(x/y)|\le \sqrt{x}|\ln(x)|+\sqrt{y}|\ln(y)|\le \frac{4}{e} $.

Next we re-write the partial Riemann sum by substituting the new index $ k=1-j $ \begin{align*}
& \sum_{j=-J+1}^0 \int_{(j-1)\Delta}^{j\Delta} e^{iw_j\ln(x/y)}\sqrt{xy}\rho(w)  dw
= \sum_{k=1}^J \int_{-k\Delta}^{(1-k)\Delta} e^{i(1/2-k)\Delta \ln(x/y)}\sqrt{xy}\rho(w)dw\\
={}& \sum_{k=1}^J \int_{(k-1)\Delta}^{k\Delta} e^{-iw_k\ln(x/y)} \sqrt{xy} \rho(w)dw\enspace.
\end{align*}
Therefore the entire Riemann sum can be re-written as
\begin{align*} 
& \sum_{j=-J+1}^J \int_{(j-1)\Delta}^{j\Delta} e^{iw_j\ln(x/y)}\sqrt{xy}\rho(w)  dw 
= \sum_{j=1}^J \int_{(j-1)\Delta}^{j\Delta} (e^{iw_j\ln(x/y)} +e^{-iw_j\ln(x/y)}) \sqrt{xy}\rho(w)  dw\\
={} & 2\sum_{j=1}^J  (\cos(w_j\ln x)\cos(w_j\ln y) + \sin(w_j\ln x)\sin(w_j\ln y)) \sqrt{xy}\int_{(j-1)\Delta}^{j\Delta}\rho(w)  dw\\
={}& \left\langle \bigoplus_{j=1}^J \tau(x,w_j,j), \bigoplus_{j=1}^J \tau(y,w_j,j)\right\rangle\enspace.
\end{align*}

\end{appendices}

\end{document}